\documentclass{article}

\usepackage{microtype}
\usepackage{graphicx}
\usepackage{subfigure}
\usepackage{booktabs} 

\usepackage{hyperref}

\usepackage[accepted]{icml2025}

\usepackage{amsmath}
\usepackage{amssymb}
\usepackage{mathtools}
\usepackage{amsthm}
\usepackage{xcolor}
\usepackage{colortbl}
\definecolor{lightgray}{gray}{0.95}
\definecolor{mediumgray}{gray}{0.90}
\usepackage{tcolorbox}
\usepackage{enumitem}

\newcommand{\mathbbm}[1]{\text{\usefont{U}{bbm}{m}{n}#1}}

\usepackage{booktabs}      
\usepackage{multirow}      
\usepackage{graphicx}

\usepackage{enumitem} 
\usepackage[capitalize,noabbrev]{cleveref}

\theoremstyle{plain}
\newtheorem{theorem}{Theorem}[section]

\newtheorem{corollary}[theorem]{Corollary}
\theoremstyle{definition}
\newtheorem{definition}[theorem]{Definition}
\newtheorem{assumption}[theorem]{Assumption}
\theoremstyle{remark}

\usepackage{thmtools}
\usepackage{thm-restate}

\usepackage[textsize=tiny]{todonotes}

\icmltitlerunning{Submission and Formatting Instructions for ICML 2025}

\begin{document}

\twocolumn[
\icmltitle{Cross-Modal Memory Compression for Efficient Multi-Agent Debate}



\icmlsetsymbol{equal}{*}

\begin{icmlauthorlist}
\icmlauthor{Jing Wu}{equal}
\icmlauthor{Yue Sun}{equal}
\icmlauthor{Tianpei Xie}{}
\icmlauthor{Suiyao Chen}{}
\icmlauthor{Jingyuan Bao}{}
\icmlauthor{Yaopengxiao Xu}{}
\icmlauthor{Gaoyuan Du}{}
\icmlauthor{Inseok Heo}{}
\icmlauthor{Alexander Gutfraind}{}
\icmlauthor{Xin Wang}{}
\end{icmlauthorlist}


\icmlcorrespondingauthor{Firstname1 Lastname1}{first1.last1@xxx.edu}
\icmlcorrespondingauthor{Firstname2 Lastname2}{first2.last2@www.uk}

\icmlkeywords{Machine Learning, ICML}

\vskip 0.3in
]



\printAffiliationsAndNotice{\icmlEqualContribution} 

\begin{abstract}
Multi-agent debate can improve reasoning quality and reduce hallucinations, but it incurs rapidly growing context as debate rounds and agent count increase. Retaining full textual histories leads to token usage that can exceed context limits and often requires repeated summarization, adding overhead and compounding information loss. We introduce DebateOCR, a cross-modal compression framework that replaces long textual debate traces with compact image representations, which are then consumed through a dedicated vision encoder to condition subsequent rounds. This design compresses histories that commonly span tens to hundreds of thousands of tokens, cutting input tokens by more than 92\% and yielding substantially lower compute cost and faster inference across multiple benchmarks. We further provide a theoretical perspective showing that diversity across agents supports recovery of omitted information: although any single compressed history may discard details, aggregating multiple agents’ compressed views allows the collective representation to approach the information bottleneck with exponentially high probability.
\end{abstract}

\begin{figure}[t!]
\begin{center}
    \includegraphics[width=1\linewidth]{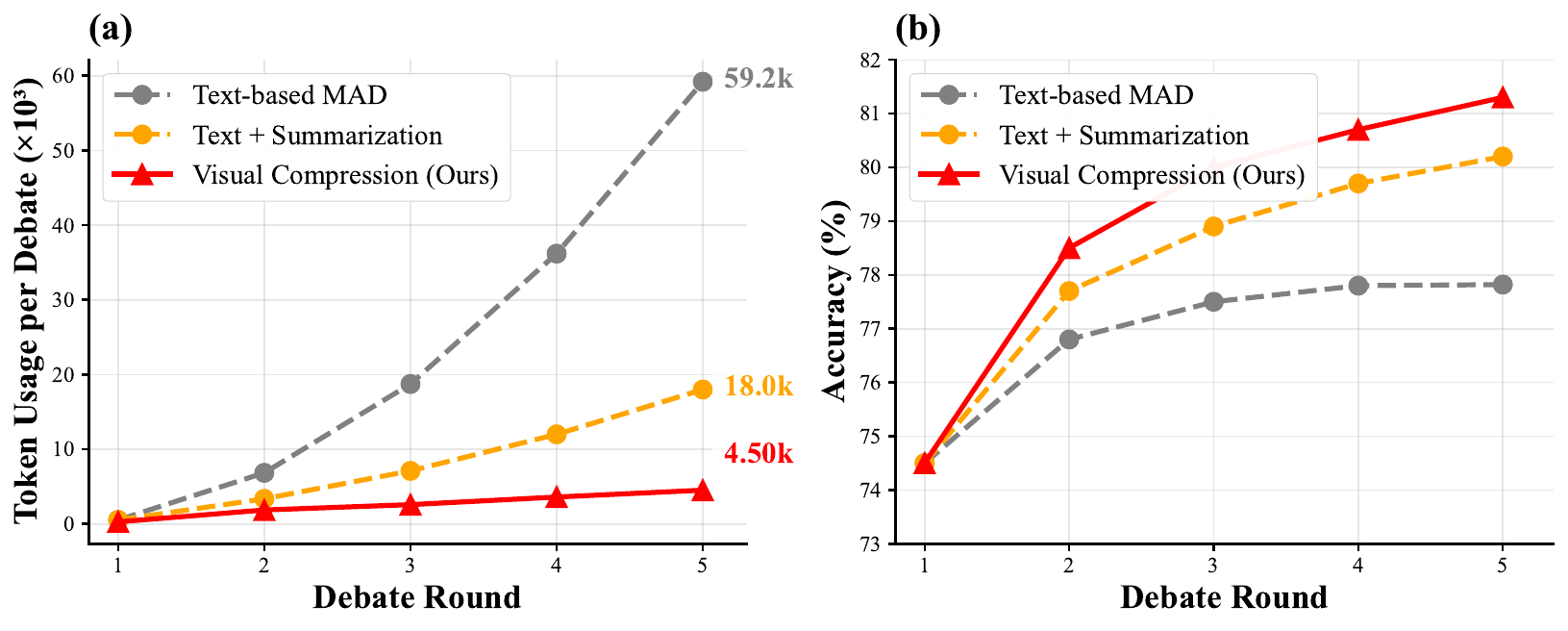}
\end{center}
\vspace{-5mm}
\caption{Visual compression addresses the token inefficiency of multi-agent debate. 
We compare three paradigms across 5 debate rounds using Qwen2-VL on GSM8K with 3 agents. 
(a) \textbf{Token consumption:} Text-based MAD accumulates 59.2K tokens by Round 5 
due to repeatedly storing debate history, while our visual compression reduces this 
by 92.2\% to only 4.5K tokens. Text with summarization achieves 69.6\% reduction to 18.0K tokens. 
(b) \textbf{Reasoning accuracy:} Despite lower computational cost, visual compression 
achieves the highest accuracy of 81.3\%, outperforming text with summarization
and text-based MAD.
\textit{Note:} At Round 5, text-based MAD (the gray curve) exceeds the context window limit. Therefore, the accuracy is measured with truncated debate history.}
\vspace{-5mm}
\label{fig:motivation}
\end{figure}

\section{Introduction}
\label{Intro}
Multi-agent debate (MAD) has emerged as a powerful paradigm for enhancing large language model (LLM) performance across reasoning, factuality, and complex problem-solving tasks\cite{du2023improving,khan2024debating,liu2025breaking,wu2025unfixing,liang2023encouraging}. By enabling multiple model instances to propose, critique, and refine responses through iterative discussion, MAD consistently outperforms single-agent approaches on mathematical reasoning \cite{cobbe2021training}, question answering \cite{hendrycks2020measuring}, and image captioning \cite{lin2014microsoft, wu2025building, lin2024interpreting}, etc. The fundamental principle underlying MAD's success is that diverse perspectives and iterative refinement converge toward more reliable solutions, similar to human collaborative problem-solving.

However, the computational overhead of MAD scales rapidly with both the number of agents and debate rounds as shown in Figure~\ref{fig:motivation}. Each agent must maintain complete debate histories as textual context, with token consumption growing quadratically as histories are replicated across all agents. As a result, extended debates frequently exceed context window limits, and lengthy debate histories complicate final decision-making as judges must extract relevant information from increasingly verbose exchanges. Recent analysis reveals that context limitations and communication breakdowns account for significant performance degradation in multi-agent systems, with agents struggling to maintain comprehensive state across extended interactions \cite{cemri2025multi}. Existing approaches address this through periodic summarization or truncation strategies \cite{chen2024reconcile,liu2025breaking,wu2025unfixing}, but these introduce additional computational cost and inference latency. To the best of our knowledge, no scalable solution exists for maintaining full debate context without prohibitive token overhead.

In this work, we propose a framework that applies visual compression to substantially reduce token consumption in multi-agent debate. Our key insight is that textual debate histories can be rendered as images and processed by specialized vision encoders through cross-modal operations, effectively converting text tokens into vision tokens at a fraction of the original cost. We design a compression-optimized vision encoder that maintains minimal activations under high-resolution inputs while achieving over 92\% token reduction. This approach fundamentally addresses the scalability challenge by converting the quadratically growing textual context into compact visual representations shared across agents.

Beyond efficiency, our framework offers several advantages: (1) it seamlessly integrates with existing MAD algorithms without architectural modifications; (2) it eliminates the need for summarization strategies, preserving complete debate histories; (3) visual encoding maintains richer contextual information than text alone, as vision tokens naturally capture the structural relationships and logical flow of debates, leading to improved reasoning quality.

We evaluate our framework on three reasoning benchmarks: MATH~\cite{hendrycks2020measuring}, 
GSM8K~\cite{cobbe2021training}, and GPQA~\cite{rein2024gpqa}, using four vision-language 
models: Qwen2.5-VL-7B \cite{bai2025qwen2}, Llama-3.2-11B-Vision \cite{meta2024llama}, InternVL2-8B \cite{chen2024internvl}, and Pixtral-12B \cite{agrawal2024pixtral}. The cross-modal compression method achieves over 92\% token reduction, while maintaining competitive accuracy. These gains 
require no modifications to the underlying debate algorithm or agent architecture.



To explain why compression preserves accuracy despite dramatic token reduction, we develop a theoretical analysis from the perspective of the Information Bottleneck~\cite{kawaguchi2023does}, which characterizes the minimum information required for optimal decision-making. We show that, as the number of diverse agents increases, compressed histories converge to this bottleneck. The central mechanism is information recovery through diversity: although each agent’s compressed history may discard some task-relevant information, different agents tend to preserve complementary aspects of the signal. When a majority of agents successfully retain the relevant information, aggregation recovers an essentially complete representation. At the same time, compression suppresses spurious artifacts, and independent artifacts introduced by different agents are further canceled through aggregation. Together, these effects explain why multi-agent systems can tolerate aggressive compression: collective redundancy offsets individual information loss, enabling accurate system-level decisions despite severe per-agent token reduction.

\section{Related Work}
\label{Related}

\subsection{Multi-Agent Debate for LLM Reasoning}
Multi-agent debate (MAD) enhances LLM reasoning by enabling multiple agents to propose, critique, and refine responses through iterative discussion \cite{du2023improving, liang2023encouraging, chan2023chateval}. Prior work explores role assignment strategies \cite{wang2023unleashing}, debate protocols that encourage error correction \cite{khan2024debating, liu2025breaking}, and expert-guided collaboration through meta-programming and consistency mechanisms \cite{hong2023metagpt, xiong2023examining, pham2023let}. Recent advances include reflective multi-agent collaboration \cite{bo2024reflective} and self-improvement through reinforcement learning \cite{chen2024self, subramaniam2025multiagent}. However, recent analysis reveals that context limitations and inter-agent misalignment cause significant failures, with agents struggling to maintain state across extended interactions \cite{cemri2025multi}. This stems from a fundamental bottleneck: debate histories grow quadratically with agents and rounds, frequently exceeding context windows and requiring computationally expensive summarization \cite{chen2024reconcile, wu2025unfixing}.

\subsection{Context Compression for LLMs}
Context compression addresses long-context challenges through multiple approaches. Vision-based methods convert text into images processed by lightweight encoders, achieving substantial compression ratios \cite{wei2025deepseek, xing2025vision}. Text-based methods employ soft prompt compression \cite{ge2023context, mu2023learning, chevalier2023adapting} or selection-based token pruning using information entropy \cite{li2023compressing, jiang2023llmlingua}. Memory-augmented architectures extend context through external memory banks \cite{mohtashami2023landmark, tworkowski2023focused}. However, vision-based compression for multi-agent systems remains unexplored. Our work differs by applying visual compression specifically to dynamic debate contexts rather than static documents, addressing the unique challenges of multi-agent communication efficiency.

\subsection{Vision-Language Models}
Vision-language models integrate visual and textual modalities for multimodal understanding \cite{zhang2024vision}. Key architectures include Qwen2-VL \cite{wang2024qwen2}, LLaVA-OneVision \cite{li2024llava}, InternVL2 \cite{chen2024far}, and DeepSeek-VL \cite{wu2024deepseek}, built upon vision encoders like CLIP \cite{radford2021learning}, ViT \cite{dosovitskiy2020image}, and SAM \cite{kirillov2023segment}. Recent work explores parameter-efficient adaptation and token compression for improved efficiency \cite{danish2025comprehensive}. We leverage pretrained vision encoders to compress textual debate contexts into compact visual representations, extending VLM capabilities to multi-agent communication.

\begin{figure*}[t!]
\begin{center}
    \includegraphics[width=0.9\linewidth]{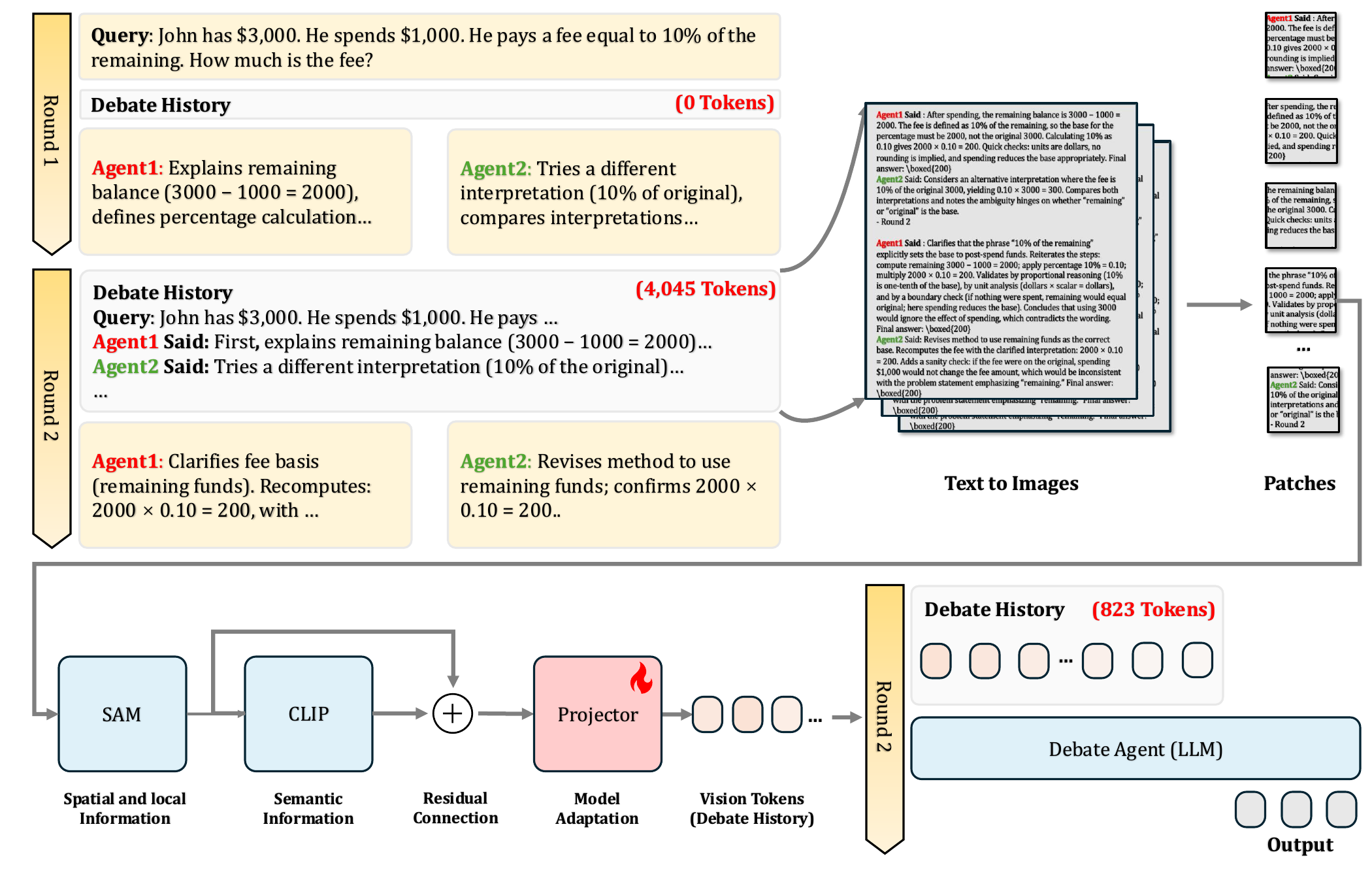}
\end{center}
\vspace{-5mm}
\caption{Visual compression framework for multi-agent debate. \textbf{Top:} Text-based debate accumulates token count across rounds. \textbf{Bottom:} Our approach converts debate history into visual representations. We first train a lightweight projector to adapt SAM-CLIP features to the vision embedding space of target MLLMs. During inference, debate history text is encoded as images via SAM, embedded through CLIP, and projected into small amount of vision tokens, achieving token reduction while preserving debate context for subsequent rounds.}
\label{fig:framework}
\vspace{-3mm}
\end{figure*}

\section{Method}
\label{sec:method}

We propose DebateOCR, a framework that leverages visual compression to address the scalability challenges of multi-agent debate, reducing token consumption over 90\%.

\subsection{Preliminaries and Computational Challenges}
\label{subsec:mad_preliminaries}

\textbf{Preliminaries.} 
We formalize multi-agent debate as a sequential state-expansion process involving 
a set of $K$ agents, denoted by $\mathcal{A} = \{A_1, \dots, A_K\}$. Each agent 
$A_i$ is instantiated as a LLM. Mathematically, we treat 
each agent as a policy $\pi_i$ that maps a query context to a textual response. 
Given a query $q \in \mathcal{Q}$ and a debate history $H_r = \{H_{i,r}\}_{i=1}^K$, the agent $i$ samples 
a solution $s_{i,r}$ at round $r$:
\begin{equation}
\label{eq:policy}
    s_{i,r} \sim \pi_i(\cdot \mid q, H_r),
\end{equation}
where $s_{i,r} \in \mathcal{S}$ represents the sequence of tokens generated by 
agent $i$ at round $r$, $\mathcal{S}$ denotes the output token space and $H_{i,r}$ denote the history generated at agent $i$ and round $r$ using the correspnding query.

The state of the system is defined by the accumulation of generated solutions. 
We define the history $H_r$ recursively. At the initial round ($r=1$), the 
history is empty: $H_1 = \emptyset$. For any subsequent round $r > 1$, the 
history is updated by appending the set of all solutions from the previous round:
\begin{equation}
\label{eq:history}
    H_r = H_{r-1} \cup \{ s_{1, r-1}, \dots, s_{K, r-1} \}.
\end{equation}
Here, $H_r$ serves as the global context observed by all $K$ agents to generate 
the next step of reasoning.

The debate concludes after a fixed horizon of $R$ rounds. The final output $y$ 
is derived by applying a consensus function $\phi$ over the final set of solutions:
\begin{equation}
    y = \phi\left( s_{1,R}, \dots, s_{K,R} \right),
\end{equation}
where $\phi$ typically represents a majority vote, model-based judge, or weighted 
aggregation. The choice of $\phi$ does not affect our analysis of computational 
costs.

\textbf{The Scalability Challenge.}
The primary bottleneck in this formulation is the quadratic growth of input 
context tokens. We quantify this cost by analyzing the total number of tokens 
processed across all agents and rounds.

Let $L$ denote the average length of a solution $s_{i,r}$ in tokens. The size 
of the history context at round $r$, denoted as $|H_r|$, scales linearly with 
the number of agents and rounds:
\begin{equation}
    |H_r| = K \cdot (r-1) \cdot L.
\end{equation}

However, the computational burden is multiplicative. Since all $K$ agents must 
process this history independently at every round, the token consumption at 
round $r$ is:
\begin{equation}
    C(r) = K \cdot |H_r| = K^2 \cdot (r-1) \cdot L.
\end{equation}

The cumulative token consumption across all $R$ rounds becomes:
\begin{equation}
\label{eq:quadratic_cost}
    C_{\text{total}} = \sum_{r=1}^{R} C(r) = K^2 L \sum_{r=1}^{R} (r-1) 
    = \frac{K^2 L R(R-1)}{2}.
\end{equation}

This implies a complexity of $\mathcal{O}(K^2 R^2 L)$ as both the number of 
agents and debate rounds scale. Figure~\ref{fig:motivation}(a) empirically 
demonstrates this quadratic scaling: a 5-round debate on GSM8K with 3 agents 
accumulates 59.5K tokens, exceeding typical context windows. This quadratic 
scaling severely limits the applicability of MAD to complex reasoning tasks 
requiring extended deliberation, as token budgets are quickly exhausted even 
with modest numbers of agents and rounds.

\subsection{Framework Overview}
\label{subsec:framework_overview}

To address the quadratic token scaling of Eq.~\ref{eq:quadratic_cost}, we propose 
a visual compression framework that converts textual debate histories into compact 
visual representations. The approach operates in two distinct phases: \textit{an offline 
training phase} that learns to align visual encodings with the target MLLM's 
embedding space, and an \textit{online inference phase} that compresses debate histories 
into a constant number of vision tokens.

\textbf{Two-Phase Pipeline.}
Our framework leverages a vision encoding strategy that combines 
complementary feature representations from pre-trained SAM and CLIP encoders. 
A lightweight adapter network learns to project their joint features into the 
target MLLM's vision token space. This adapter is trained once per target MLLM 
and transfers seamlessly across different debate scenarios without requiring 
scenario-specific fine-tuning.

\noindent\textbf{Training phase}: 
We train the adapter to reconstruct text from rendered images using a diverse 
corpus spanning multiple reasoning domains. The training objective encourages the 
adapter to learn a compressed representation that preserves the semantic content 
necessary for accurate text reconstruction. Details are included in Section~\ref{subsec:training_phase}.

\noindent\textbf{Inference phase}: 
During a multi-agent debate, we render the textual history as a structured image, 
encode it through the trained vision pipeline into a fixed-size sequence of vision 
tokens, and inject these tokens into each agent's context in place of the original 
text. This compression reduces token consumption from $\mathcal{O}(K^2 R^2 L)$ to 
$\mathcal{O}(K R N)$, where $N$ is a constant significantly smaller than $K R L$, 
achieving 80-97\% token reduction as demonstrated in Figure~\ref{fig:motivation}. We provide detailed description at Section~\ref{subsec:inference}.

\subsection{Training Phase}
\label{subsec:training_phase}

We train a lightweight adapter to align visual features from rendered debate histories 
with the target MLLM's embedding space. The training pipeline consists of frozen 
pre-trained encoders that extract visual features, followed by a trainable projection 
network that maps these features into the MLLM's token space. Crucially, only the 
adapter parameters are updated during training, where SAM, CLIP, and the MLLM remain 
completely frozen, enabling efficient training while leveraging strong pretrained 
representations.

\subsubsection{ENCODER PIPELINE}

The encoder pipeline processes rendered debate history images through a serial architecture 
combining SAM and CLIP encoders. Both SAM and CLIP remain frozen 
throughout training, following the design principles of DeepSeek-OCR \cite{wei2025deepseek}.

\textbf{SAM Feature Extraction.} Given a rendered image $I_r$ at $1024 \times 1024$ 
resolution, SAM-base first processes it using window-based attention with patch size 16. 
The SAM backbone outputs spatial features $F_{\text{SAM}} \in \mathbb{R}^{B \times 768 \times 64 \times 64}$ 
that capture fine-grained text layout and positioning information.

\textbf{Neck Module Architecture.} To align SAM features with CLIP's input requirements, 
we employ a Feature Pyramid Network-style neck module:
\begin{align*}
&F_{\text{SAM}} \xrightarrow{\text{Stage 1}} F_1 \xrightarrow{\text{Stage 2}} F_2 \xrightarrow{\text{Stage 3}} F_{\text{neck}} \\
&\mathbb{R}^{768 \times 64^2} \rightarrow \mathbb{R}^{256 \times 64^2} \rightarrow \mathbb{R}^{512 \times 32^2} \rightarrow \mathbb{R}^{1024 \times 16^2}
\end{align*}
Stage 1 applies $\text{Conv}_{1\times1}$, LayerNorm2d, and $\text{Conv}_{3\times3}$; 
Stages 2-3 apply $\text{Conv}_{3\times3}$ with stride 2, reducing resolution by $4\times$ 
while expanding to 1024 channels.

\textbf{CLIP Feature Extraction.} The neck module output serves as input to CLIP-Large, 
bypassing CLIP's standard patch embedding layer. To transform spatial features into the 
sequence format expected by CLIP transformers, we flatten the spatial dimensions and transpose 
to form a token sequence: $[B, 1024, 16, 16] \xrightarrow{\text{flatten}} [B, 1024, 256] \xrightarrow{\text{transpose}} [B, 256, 1024]$, 
yielding 256 tokens per image. CLIP-Large's learned positional embeddings are interpolated 
via bicubic interpolation from their original grid size to match the $16 \times 16$ spatial 
layout, then added to the feature sequence.  CLIP applies dense global attention to 
extract semantic representations, outputting $f_{\text{CLIP}} \in \mathbb{R}^{d_v}$ at each 
spatial position, where $d_v = 1024$.


\textbf{Feature Fusion with Residual Connection.} 
To preserve both spatial and semantic information, we fuse the neck-module features with the CLIP representation using a residual connection:
\[
\mathbf{f} = \mathbf{f}_{\text{CLIP}} + \mathbf{f}_{\text{neck}}, \quad \mathbf{f} \in \mathbb{R}^{d_v}
\]
where $d_v = 1024$. This residual fusion preserves fine grained spatial details from SAM while maintaining the semantic structure of CLIP embeddings, producing a compact representation for the subsequent adapter network.

\subsubsection{Adapter Network}
\label{subsubsec:adapter_network}

The adapter $\mathcal{P}_\theta$ is a lightweight projection network that transforms 
the encoder features into the target MLLM's embedding space. Given fused 
features $\mathbf{f} \in \mathbb{R}^{d_f}$ at each spatial position (where $d_f = d_v$), 
the adapter applies a two-layer MLP:
\begin{equation}
\label{eq:adapter_projection}
    \mathbf{z} = \mathcal{P}_\theta(\mathbf{f}) = \text{LayerNorm}(\mathbf{W}_2 \sigma(\mathbf{W}_1 \mathbf{f} + \mathbf{b}_1) + \mathbf{b}_2),
\end{equation}
where $\mathbf{W}_1 \in \mathbb{R}^{d_h \times d_f}$ projects to hidden dimension 
$d_h$, $\mathbf{W}_2 \in \mathbb{R}^{d \times d_h}$ projects to the MLLM's embedding 
dimension $d$, and $\sigma$ denotes GELU activation. Layer normalization is applied 
to the output for training stability. This produces a sequence of vision tokens $\mathbf{Z} = \{\mathbf{z}_1, \ldots, 
\mathbf{z}_N\} \in \mathbb{R}^{N \times d}$, where $N$ depends on image resolution.


The adapter's role as a lightweight feature projection layer, combined with training 
on diverse reasoning domains, enables it to transfer across different debate scenarios 
without requiring task-specific fine-tuning. Once trained for a target MLLM, the 
same adapter works across all debate protocols and topics.

\subsubsection{Training Procedure}
\label{subsubsec:training_procedure}

\textbf{Training Objective.}
We train the adapter through autoregressive text reconstruction. Given a text sample 
$T$, we render it as image $\mathcal{I}$, extract and project features to obtain 
vision tokens $\mathbf{Z}$, and minimize the cross-entropy loss between the MLLM's 
output and ground-truth text:
\begin{equation}
\label{eq:training_loss}
    \mathcal{L}(\theta) = -\sum_{t=1}^{|T|} \log p_{\text{MLLM}}(T_t \mid \mathbf{Z}),
\end{equation}
where $p_{\text{MLLM}}$ denotes the MLLM's generation probability conditioned on 
vision tokens. Only the adapter parameters $\theta = \{\mathbf{W}_1, \mathbf{W}_2, 
\mathbf{b}_1, \mathbf{b}_2\}$ are updated, while SAM, CLIP, and the MLLM remain frozen,
largely reducing computational cost while leveraging pretrained representations.

\subsection{Inference Phase}
\label{subsec:inference}

During multi-agent debate, we apply the trained framework to compress debate histories 
at each round $r$. The inference pipeline consists of three stages: rendering the 
textual history as a structured image, encoding the image through the frozen 
SAM-CLIP-adapter pipeline, and injecting the resulting vision tokens into each 
agent's context window.

\subsubsection{Text-to-Image Rendering}
\label{subsubsec:text_rendering}

At round $r$, the debate history $H_r = \{s_{i,t}\}_{i=1,t=1}^{K,r-1}$ contains 
all previous agent responses. We render $H_r$ as a structured image $\mathcal{I}_r$ 
that preserves agent identities and temporal ordering through spatial layout. Each 
agent's response is rendered with identifying markers (e.g., "Agent 1:", "Agent 2:") 
and rounds are visually separated. This spatial organization enables vision encoders 
to capture both response content and debate structure through visual parsing, which 
linear text concatenation cannot preserve.

The image resolution and layout balance text legibility for the encoders against 
the resulting number of vision tokens. Implementation details are provided in 
Section~\ref{sec:experiments}.

\subsubsection{Vision Encoding}
\label{subsubsec:vision_encoding}

The rendered image $\mathcal{I}_r$ is processed through the trained encoding pipeline: 
SAM extracts spatial features, CLIP builds semantic representations via the residual 
connection, and the adapter projects the combined features into vision tokens 
$\mathbf{Z}_r = \{\mathbf{z}_1, \ldots, \mathbf{z}_N\} \in \mathbb{R}^{N \times d}$.

The number of vision tokens $N$ remains constant for each round of debate history, 
enabling fixed-size compression of arbitrarily long debates. This reduces token 
consumption from $\mathcal{O}(K^2 R^2 L)$ to $\mathcal{O}(K R N)$.

\subsubsection{Context Injection}
\label{subsubsec:context_injection}

The vision tokens $\mathbf{Z}_r$ replace the textual debate history in each agent's 
context window. At round $r$, agent $i$ generates its response conditioned on the 
query and compressed history:
\begin{equation}
\label{eq:visual_generation}
    s_{i,r} \sim \pi_i(\cdot \mid q, \mathbf{Z}_r).
\end{equation}

The MLLM processes vision tokens identically to text tokens through its transformer 
layers, requiring no modification to the MLLM architecture or debate protocol. 


\section{Theoretical Analysis}
\label{sec: theoretical_analysis}

To understand why compression maintains accuracy despite dramatic token reduction, we provide a theoretical analysis from Information Bottleneck~\cite{kawaguchi2023does}. Our key result shows that multi-agent aggregation enables compressed histories to approach the information bottleneck, the optimal trade-off between preserving answer-relevant information and removing spurious artifacts.

Consider a multi-agent debate system with $K$ agents deliberating on query $q$ with ground truth $y_q$. Each agent $i$ generates a debate history $H_{i,r}$ through round $r$. Let $\mathcal{H}$ denote the collection of histories and $f : \mathcal{H}^K \to \mathcal{Y}$ the aggregation function (e.g., majority voting) that produces the final answer.

Debate histories contain two types of information: (1) \textbf{answer-relevant information} $I(H; Y_q)$ necessary for correct decisions, and (2) \textbf{artifacts} $V_i$ representing agent-specific styles, redundancies, and tangential explorations. 
Let $\mathcal{C} : \mathcal{H} \to \mathcal{Z}$ denote a compression function of the debating history with $|\mathcal{Z}| \ll |\mathcal{H}|$.

\begin{definition}[Information Bottleneck]
\label{def:bottleneck}
The \textbf{information bottleneck} is the minimum mutual information required for optimal decisions:
\begin{equation*}
I_{\text{bottleneck}} = \min_{\mathcal{H}} \{I(\mathcal{H}; Y_q) : \mathbb{E}[\ell(f(\mathcal{H}), y_q)] = \ell_{\min}\}
\end{equation*}
where $\ell_{\min} = \inf_{\mathcal{H}} \mathbb{E}[\ell(f(\mathcal{H}), y_q)]$ is the minimum achievable expected loss.
\end{definition}

Distance to the bottleneck is measured by:
\begin{equation}
D(\mathcal{H}) = |I(f(\mathcal{H}); Y_q) - I_{\text{bottleneck}}| + I(f(\mathcal{H}); V)
\end{equation}
where $I(f(\mathcal{H}); V) = \sum_{i=1}^K I(f(\mathcal{H}); V_i)$ captures total artifact content.

We establish two results: (1) compression improves decision quality by removing artifacts, and (2) multi-agent aggregation naturally converges to the information bottleneck.

\begin{restatable}{theorem}{ThmMain}
\label{thm:main}
Suppose: (1) Compression preserves information with probability $p > 0.5$: $\mathbb{P}(I(\mathcal{C}(H_i); Y_q) \geq I(H_i; Y_q) - \varepsilon) \geq p$; (2) Agents have diverse styles with conditionally independent artifacts $V_i \perp V_j \mid q$; (3) Compression reduces artifacts: $I(\mathcal{C}(H_i); V_i) \leq \gamma I(H_i; V_i)$ for $\gamma \in (0,1)$.
Then compressed histories approach the bottleneck with exponentially high probability:
\begin{equation}
\mathbb{P}(|I(f(\mathcal{C}(\mathcal{H})); Y_q) - I_{\text{bottleneck}}| \leq \varepsilon) \geq 1 - e^{-2K(p-\frac{1}{2})^2}
\end{equation}
while artifacts vanish: $I(f(\mathcal{C}(\mathcal{H})); V) = O(\gamma K)$. Consequently, $D(\mathcal{C}(\mathcal{H})) < D(\mathcal{H})$ when artifacts are substantial.
\end{restatable}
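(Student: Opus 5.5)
The plan has three parts, one for each conclusion of Theorem~\ref{thm:main}: a concentration argument for the bottleneck bound, a data-processing bound for the artifact term, and a comparison of $D$ for the compressed and raw histories. The idea is that each agent's compressed history is an independent trial that either succeeds or fails at preserving answer-relevant information. Majority aggregation then succeeds whenever more than half of the trials succeed, and Hoeffding's inequality yields the exponential rate.

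\textbf{Step 1 (indicator reduction).} For each agent $i$, let $X_i=\mathbbm{1}\{I(\mathcal{C}(H_i);Y_q)\ge I(H_i;Y_q)-\varepsilon\}$. By assumption (1), $\mathbb{E}[X_i]\ge p$. Conditionally on $q$, the randomness in each $\mathcal{C}(H_i)$ is driven by agent-specific sources, and assumption (2) gives $V_i\perp V_j\mid q$. I will use this to treat the $X_i$ as conditionally independent given $q$. This is the delicate modeling step: assumption (2) is literally about the artifacts, not about the success events. I will make explicit that the compression success events depend on agent $i$ only through its own history and style. Equivalently, I will read assumption (2) as covering the whole agent-specific randomness.

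\textbf{Step 2 (concentration).} Let $S=\sum_i X_i$. Hoeffding's inequality for bounded independent variables gives
\begin{equation*}
\mathbb{P}\bigl(S\le K/2\bigr)=\mathbb{P}\bigl(S-Kp\le -K(p-\tfrac12)\bigr)\le e^{-2K(p-\frac12)^2}.
\end{equation*}
On the complementary event, a strict majority of compressed histories retain the answer-relevant information up to $\varepsilon$. I then need a lemma stating that for majority-type $f$, a strict majority of $\varepsilon$-informative inputs forces $f(\mathcal{C}(\mathcal{H}))$ to attain the optimal loss $\ell_{\min}$, up to an $\varepsilon$ slack in information. The optimal loss is attained because the majority outputs the answer that the informative agents agree on. By Definition~\ref{def:bottleneck}, the minimal information compatible with $\ell_{\min}$ is $I_{\text{bottleneck}}$. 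So $I(f(\mathcal{C}(\mathcal{H}));Y_q)$ is sandwiched between $I_{\text{bottleneck}}$ and $I_{\text{bottleneck}}+\varepsilon$. I expect this lemma to be the main obstacle. Mutual information is a property of a distribution, not of a single realization, so ``holding on an event'' must be interpreted conditionally on $q$ or in a pointwise/realized sense. My first attempt is to work conditionally on $q$ throughout and define the informative event at the level of the per-query conditional distributions. This lets the sandwich be stated on an event of probability at least $1-e^{-2K(p-1/2)^2}$.

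\textbf{Step 3 (artifacts and comparison).} By the chain rule and data processing along $V_i\to H_i\to\mathcal{C}(H_i)\to f(\mathcal{C}(\mathcal{H}))$, each term satisfies $I(f(\mathcal{C}(\mathcal{H}));V_i)\le I(\mathcal{C}(H_i);V_i)$. Here I use conditional independence of the $V_i$ so that other agents' compressed histories carry no information about $V_i$. Assumption (3) then bounds this by $\gamma I(H_i;V_i)\le\gamma \bar V$, where $\bar V=\max_i I(H_i;V_i)$. Summing over $i$ gives $I(f(\mathcal{C}(\mathcal{H}));V)\le \gamma K\bar V=O(\gamma K)$. For the final claim, compare the two terms of $D$. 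The first term of $D(\mathcal{C}(\mathcal{H}))$ is at most $\varepsilon$ with high probability. For the raw history it is at least $0$, since uncompressed histories also meet the bottleneck up to slack. The artifact term drops from $\sum_i I(f(\mathcal{H});V_i)$ to at most $\gamma\sum_i I(H_i;V_i)$. Thus $D(\mathcal{C}(\mathcal{H}))<D(\mathcal{H})$ whenever $(1-\gamma)$ times the raw artifact content exceeds $\varepsilon$. I will take this condition as the formal meaning of ``artifacts are substantial,'' assuming raw aggregation transmits a constant fraction of $I(H_i;V_i)$.
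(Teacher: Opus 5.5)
The concentration step and the artifact bound follow the paper's Steps 1--2 and 7. Your explicit strengthening of (2) to independence of the success indicators is what the paper assumes silently (``they compress independently''). Your Step 3 needs the same strengthening, as the paper's Step 7 does. The reason is that $V_i\perp V_j\mid q$ alone does not make $\mathcal{C}(H_j)$ uninformative about $V_i$. In the debate recursion it is not uninformative, because agent $j$ conditions on agent $i$'s earlier responses.

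The genuine gap is the Step 2 lemma, and the obstacle is not that mutual information is distributional. Your proposed fix there fails anyway: if $Y_q$ is the ground truth for $q$, conditioning on $q$ makes it constant, every $I(\cdot\,;Y_q\mid q)$ vanishes, and the statement becomes vacuous. The real problem is that (1)--(3) never say anything is informative enough to decide $Y_q$.
\begin{itemize}
\item Hypothesis (1) is relative, so an agent with an uninformative raw history has $X_i=1$ trivially. Take $Y_q$ a uniform bit, $H_1=Y_q$, $H_2,H_3$ independent noise bits, compression that loses nothing, and $V_i$ independent of everything. Then (1)--(3) hold, with $p=1$ in (1). Yet majority voting gives $I(f(\mathcal{C}(\mathcal{H}));Y_q)=1-h(1/4)\approx 0.19$ bits, with $h$ the binary entropy. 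Under $0$-$1$ loss, $I_{\text{bottleneck}}=1$ bit, so for small $\varepsilon$ the conclusion fails outright, and no lemma can close the step without new premises.
\item Even with every agent fully informative, information is not correctness. Take $H_1=Y_q$ and $H_2=H_3=1-Y_q$. Each is perfectly informative and the informative majority agrees, yet the vote is always wrong. So attaining $\ell_{\min}$ fails, even though the information target holds; your lemma is both false and stronger than needed.
\item An $\varepsilon$ information loss could at best give near-optimal loss, about which the bottleneck definition says nothing.
\item Even exact optimality only places you in the feasible set of a minimum, which gives a lower bound. The upper half of your sandwich, $I(f(\mathcal{C}(\mathcal{H}));Y_q)\le I_{\text{bottleneck}}+\varepsilon$, has no source in your argument.
\end{itemize}

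The paper never claims loss-optimality. It gets the two sides separately, from premises added inside the proof.
\begin{itemize}
\item Lower bound: $I(f(\mathcal{C}(\mathcal{H}));Y_q)\ge\max_{i:X_i=1}I(\mathcal{C}(H_i);Y_q)\ge\max_{i:X_i=1}I(H_i;Y_q)-\varepsilon\ge I_{\text{bottleneck}}-\varepsilon$. The last step uses the asserted premise $I(H_i;Y_q)\ge I_{\text{bottleneck}}$ for every agent (Step 4), which rules out the first example.
\item Upper bound: data processing, $I(f(\mathcal{C}(\mathcal{H}));Y_q)\le I(\{H_i\};Y_q)$, plus a no-redundancy premise $I(\{H_i\};Y_q)\le I_{\text{bottleneck}}+\varepsilon$ (Step 5).
\end{itemize}
The first inequality of the lower bound, which the paper calls subadditivity, is not a theorem either, since data processing points the other way. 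Three always-distinct relabelings of a uniform ternary $Y_q$, with a constant default on ties, give $I(f;Y_q)=0$. So it too must be assumed as a property of the aggregator.

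To repair your proof, drop the lemma and state explicitly, alongside independence of the $X_i$:
\begin{itemize}
\item raw histories reach the bottleneck;
\item the aggregator retains the best successful agent's information;
\item the raw histories carry no redundant information.
\end{itemize}
Then aim for the interval $[I_{\text{bottleneck}}-\varepsilon,\,I_{\text{bottleneck}}+\varepsilon]$ rather than $[I_{\text{bottleneck}},\,I_{\text{bottleneck}}+\varepsilon]$.

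Your Part II is cleaner than the paper's: bounding $D(\mathcal{H})$ below by its artifact term avoids the paper's $D(\mathcal{H})\approx\bar I_V$. But the condition is misstated. You need $\sum_i I(f(\mathcal{H});V_i)>\varepsilon+\gamma\sum_i I(H_i;V_i)$. If raw aggregation transmits a fraction $c$ of each $I(H_i;V_i)$, this reads $(c-\gamma)\sum_i I(H_i;V_i)>\varepsilon$, which requires $c>\gamma$. Your ``$(1-\gamma)$'' form is only the case $c=1$ of the main-text sketch. The appendix takes $c=1/K$ and therefore needs $\gamma<1/K$.
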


\begin{proof}[Proof sketch]
The key insight is that diverse agents cover complementary aspects. With probability $p > 0.5$, each agent preserves its information. By Hoeffding's inequality, the majority preserve information with probability $\geq 1 - e^{-2K(p-1/2)^2}$. Since agents cover different aspects (diversity), their union approaches complete coverage: $I(f(\mathcal{C}(\mathcal{H})); Y_q) \geq I_{\text{bottleneck}} - \varepsilon$. Meanwhile, artifacts are conditionally independent and reduced by factor $\gamma$, yielding $I(f(\mathcal{C}(\mathcal{H})); V) \leq \gamma \sum_i I(H_i; V_i)$. For complete debate histories, $I(f(\mathcal{H}); Y_q) \approx I_{\text{bottleneck}}$ but $I(f(\mathcal{H}); V) \geq \bar{I}_V := \frac{1}{K}\sum_i I(H_i; V_i)$. Thus $D(\mathcal{C}(\mathcal{H})) \leq \varepsilon + \gamma K \bar{I}_V < K\bar{I}_V \lesssim D(\mathcal{H})$ when $(1-\gamma)K\bar{I}_V > \varepsilon$. Full proof in Appendix~\ref{sect:proof}.
\end{proof}

\begin{table*}[t!]
\centering
\caption{Performance comparison across models and datasets after 5 debate rounds with 3 agents. 
We report accuracy (\%), average number of input tokens per sample (in thousands), and average 
inference time per sample (seconds). 
$\uparrow$: higher is better, $\downarrow$: lower is better. Best results per model are 
\textbf{bolded}.}
\label{tab:main_results}
\setlength{\tabcolsep}{5pt}
\small
\resizebox{\textwidth}{!}{%
\begin{tabular}{@{}ll@{\hskip 10pt}ccc@{\hskip 10pt}ccc@{\hskip 10pt}ccc@{}}
\toprule
\multirow{2}{*}{\textbf{Model}} & \multirow{2}{*}{\textbf{Method}} & 
\multicolumn{3}{c@{\hskip 10pt}}{\textbf{GSM8K}} & 
\multicolumn{3}{c@{\hskip 10pt}}{\textbf{MATH}} & 
\multicolumn{3}{c}{\textbf{GPQA}} \\
\cmidrule(lr){3-5} \cmidrule(lr){6-8} \cmidrule(lr){9-11}
& & Accuracy & \# Tokens & Time & Accuracy & \# Tokens & Time & Accuracy & \# Tokens & Time \\
& & $\uparrow$ (\%) & $\downarrow$ (K) & $\downarrow$ (s) & 
    $\uparrow$ (\%) & $\downarrow$ (K) & $\downarrow$ (s) & 
    $\uparrow$ (\%) & $\downarrow$ (K) & $\downarrow$ (s) \\
\midrule
\rowcolor{white}
InternVL-8B     & \textcolor{cyan}{T-MAD}       & 67.5  & 59.2  & 145.7 & 42.5 & 66.3 & 160.3 & 22.1 & 69.9 & 167.6 \\
\rowcolor{white}
                & \textcolor{red}{TS-MAD}       & 68.5  & 18.0  & 154.2 & 43.1 & 19.8 & 169.6 & \textbf{23.3} & 20.7 & 177.3 \\
\rowcolor{lightgray}
                & \textbf{DebateOCR (Ours)}   & \textbf{70.5} & \textbf{4.5} & \textbf{64.7} & 
                                         \textbf{44.2} & \textbf{4.7} & \textbf{71.2} & 
                                         {22.8} & \textbf{4.9} & \textbf{74.4} \\
\cmidrule(lr){2-11}
\rowcolor{white}
Qwen2.5-VL-7B   & \textcolor{cyan}{T-MAD}       & 77.6  & 59.2  & 131.1 & 44.8 & 72.0 & 144.2 & 32.6 & 75.9 & 150.8 \\
\rowcolor{white}
                & \textcolor{red}{TS-MAD}       & 80.2  & 19.6  & 138.8 & 45.1 & 21.5 & 152.7 & 33.8 & 22.5 & 159.6 \\
\rowcolor{lightgray}
                & \textbf{DebateOCR (Ours)}   & \textbf{81.3} & \textbf{4.5} & \textbf{58.2} & 
                                         \textbf{46.8} & \textbf{4.7} & \textbf{64.0} & 
                                         \textbf{34.8} & \textbf{4.9} & \textbf{66.9} \\
\cmidrule(lr){2-11}
\rowcolor{white}
Llama-3.2-11B   & \textcolor{cyan}{T-MAD}       & 78.2  & 66.8  & 218.6 & 45.2 & 74.8 & 240.5 & 33.0 & 78.9 & 251.4 \\
\rowcolor{white}
                & \textcolor{red}{TS-MAD}       & 78.8  & 20.3  & 231.3 & 46.7 & 22.3 & 254.4 & 33.9 & 23.4 & 265.8 \\
\rowcolor{lightgray}
                & \textbf{DebateOCR (Ours)}   & \textbf{79.8} & \textbf{4.5} & \textbf{97.1} & 
                                         \textbf{47.1} & \textbf{4.7} & \textbf{106.8} & 
                                         \textbf{34.4} & \textbf{4.9} & \textbf{111.7} \\
\cmidrule(lr){2-11}
\rowcolor{white}
Pixtral-12B     & \textcolor{cyan}{T-MAD}      & 79.4  & 62.6  & 233.1 & 49.2 & 70.1 & 256.4 & 36.4 & 73.9 & 268.1 \\
\rowcolor{white}
                & \textcolor{red}{TS-MAD}          & 81.5  & 19.1  & 246.7 & 51.8 & 20.9 & 271.4 & 37.9 & 21.9 & 283.7 \\
\rowcolor{lightgray}
                & \textbf{DebateOCR (Ours)}   & \textbf{82.8} & \textbf{4.5} & \textbf{103.5} & 
                                         \textbf{53.6} & \textbf{4.7} & \textbf{113.9} & 
                                         \textbf{38.3} & \textbf{4.9} & \textbf{119.0} \\
\bottomrule
\vspace{-7mm}
\end{tabular}
}
\end{table*}

\begin{corollary}[Sample Complexity]
\label{cor:sample_complexity}
To achieve $|I(f(\mathcal{C}(\mathcal{H})); Y_q) - I_{\text{bottleneck}}| \leq \varepsilon$ with confidence $1-\delta$, it suffices to use $K \geq \frac{\ln(1/\delta)}{2(p-1/2)^2}$ agents.
\end{corollary}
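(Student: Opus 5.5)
The corollary follows by inverting the tail bound of Theorem~\ref{thm:main}. That theorem gives
\[
\mathbb{P}\bigl(|I(f(\mathcal{C}(\mathcal{H})); Y_q) - I_{\text{bottleneck}}| \leq \varepsilon\bigr) \geq 1 - e^{-2K(p-\frac{1}{2})^2}
\]
under assumptions (1)--(3). It therefore suffices to choose $K$ so that the failure probability $e^{-2K(p-1/2)^2}$ is at most $\delta$. Since $p>1/2$, the exponent coefficient $2(p-1/2)^2$ is strictly positive, so the failure bound is strictly decreasing in $K$ and can be inverted.

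Concretely, I would carry out the following steps in order.

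First, fix $\delta\in(0,1)$ and observe that the desired event holds with probability at least $1-\delta$ whenever $e^{-2K(p-1/2)^2}\le\delta$.

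Second, take logarithms, which is monotone. This turns the requirement into $-2K(p-1/2)^2 \le \ln\delta$, that is, $2K(p-1/2)^2 \ge \ln(1/\delta)$.

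Third, divide by $2(p-1/2)^2>0$, using assumption (1) that $p>1/2$. This yields exactly $K \ge \frac{\ln(1/\delta)}{2(p-1/2)^2}$.

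Fourth, conclude by monotonicity: any $K$ at least this threshold gives failure probability at most $e^{-\ln(1/\delta)}=\delta$. Hence the bottleneck approximation holds with confidence at least $1-\delta$.

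There is essentially no obstacle beyond bookkeeping. The only points needing care are, first, that the strict inequality $p>1/2$ is what makes division legitimate and the bound nontrivial, and second, that $K$ must be an integer. For the latter, I would remark that one may take $K=\lceil \ln(1/\delta)/(2(p-1/2)^2)\rceil$, which preserves the inequality.

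I would also note that the corollary inherits all hypotheses of Theorem~\ref{thm:main} unchanged. In particular, per-agent preservation probability $p$ and the conditional independence used in the Hoeffding step must hold for each of the $K$ agents. The sample complexity statement is therefore only as strong as the exponential tail bound it inverts.
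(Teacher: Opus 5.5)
Your proposal is correct and is essentially the paper's proof. Both require $e^{-2K(p-1/2)^2}\le\delta$, take logarithms, and divide by $2(p-1/2)^2$ to obtain $K\ge \ln(1/\delta)/(2(p-1/2)^2)$; your explicit notes that $p>1/2$ justifies the division and that one may take the ceiling for integer $K$ are minor refinements the paper leaves implicit.
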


Theorem~\ref{thm:main} reveals a fundamental synergy between compression and aggregation. Although each individual agent incurs information loss under compression ($\varepsilon > 0$), collective aggregation recovers the missing information through agent diversity. Different agents preserve complementary aspects of signals: what one discards, another retains. This phenomenon formalizes the wisdom of crowds in information-theoretic terms: while individual representations are imperfect, their aggregation yields a near-optimal collective representation. Crucially, the exponential concentration guarantee shows that this effect improves steadily with the number of agents $K$, rather than relying solely on an asymptotic regime. As $K$ increases, the aggregated representation concentrates increasingly tightly around the information bottleneck. This explains why aggressive compression (e.g., substantial token reduction) can succeed in multi-agent systems while potentially failing in single-agent settings.


\section{Experiments}
\label{sec:experiments}
We evaluate DebateOCR on three mathematical reasoning benchmarks. Additional details for training, Adapter design, Rendering and Debate Configuration will be available at Appendix~\ref{sect:implementation}

\subsection{Experimental Setup}
\label{subsec:exp_setup}

\subsubsection{Tasks and Datasets}
\label{subsubsec:tasks_datasets}
We evaluate on three standard mathematical reasoning benchmarks:

\textbf{GSM8K} \cite{cobbe2021training} contains 8,500 grade school math word problems 
requiring arithmetic reasoning. We use the standard test set of 1,319 problems. Problems 
typically require 2-8 reasoning steps and involve topics such as percentages, ratios, 
and basic algebra.

\textbf{MATH} \cite{hendrycks2021measuring} comprises 12,500 competition-level mathematics 
problems spanning algebra, geometry, number theory, and calculus. We evaluate on the 
5,000-problem test set. 


\textbf{GPQA} \cite{rein2024gpqa} is a graduate-level science question-answering benchmark consisting of 448 multiple-choice questions written by domain experts in biology, physics, and chemistry.

\subsubsection{Models and Baselines}
\label{subsubsec:models_baselines}
We evaluate four open-source multimodal large language models with diverse architectures:

\noindent\textbf{Qwen2.5-VL-7B-Instruct} \cite{bai2025qwen2}: A 7B-parameter vision-language 
model with adaptive resolution encoding, supporting variable aspect ratios and dynamic 
token allocation.

\noindent\textbf{Llama-3.2-11B-Vision} \cite{meta2024llama}: An 11B-parameter model combining 
the Llama-3.2 language model with a vision encoder, optimized for multimodal instruction 
following.

\noindent\textbf{InternVL-8B} \cite{chen2024internvl}: A 8B-parameter model using tile-based 
high-resolution image processing, designed for fine-grained visual understanding.

\noindent\textbf{Pixtral-12B } \cite{agrawal2024pixtral}: A 12B-parameter multimodal model from  Mistral AI featuring a 400M-parameter vision encoder with variable image resolution 
support, capable of processing arbitrary numbers of images at their natural resolution 
and aspect ratio.

We compare three approaches to managing debate history:
\noindent\textbf{The Text-based Debate (\textcolor{cyan}{T-MAD})}: Standard text-based multi-agent debate where the complete 
textual history $H_r$ is provided to each agent at every round. This represents the 
baseline MAD approach without any compression \cite{khan2024debating, du2023improving}.

\noindent\textbf{The Text-based Debate with Summarization (\textcolor{red}{TS-MAD})}: After each round, the debate history is 
compressed via extractive and abstractive summarization to reduce token count while 
preserving key arguments. Agents receive the summarized history in subsequent rounds 
rather than the full text\cite{chen2024reconcile, liu2025breaking}.

\subsubsection{Evaluation Metrics}
\label{subsubsec:metrics}

We evaluate methods along three dimensions:

\noindent\textbf{Accuracy}: We report exact match accuracy on each benchmark. For GSM8K, GPQA
and MATH.

\noindent\textbf{Token Consumption}: We measure the total number of tokens processed across all agents and rounds. For text-based methods, this includes all input tokens, including query and debate history, at each round. For the proposed method, this includes the query tokens plus vision tokens from the compressed history.

\noindent\textbf{Inference Time}: We measure wall-clock time from initial query to final 
consensus, including all model forward passes across agents and rounds, on a single 
NVIDIA A100 GPU.

\subsection{Main Results}
\label{subsec:main_results}

Table~\ref{tab:main_results} presents our main results across four vision-language models 
on three reasoning datasets. We compare our visual compression approach against two baselines: 
pure text-based debate and debate with extractive summarization. All methods use 5 debate 
rounds with 3 agents. Our visual compression achieves the best or competitive accuracy in 
most settings while dramatically reducing token consumption and inference time.

\textbf{Accuracy.}
DebateOCR achieves the best accuracy across most experimental settings. On InternVL 8B, it achieves 70.5\% on GSM8K and 44.2\% on MATH, outperforming both baselines. On Qwen2.5 VL 7B, it reaches 81.3\% on GSM8K, 46.8\% on MATH, and 34.8\% on GPQA, consistently surpassing text-based debate and summarization. Similar improvements are observed for Llama 3.2 11B and Pixtral 12B, where DebateOCR achieves 79.8\% and 82.8\% on GSM8K, respectively. The accuracy gains arise from improved preservation of spatial structure and formatting in rendered images, which helps models track multi-agent exchanges more effectively than long token sequences. In one case—GPQA with InternVL 8B—summarization slightly outperforms our method with 23.3\% compared to 22.8\%, likely because extractive summaries better retain domain-specific terminology for graduate-level questions.

\begin{table}[t]
\centering
\caption{Ablation study on image resolution on MATH with 5 rounds and 3 agents using Qwen2.5-VL-7B.}
\vspace{1mm}
\label{tab:resolution_ablation}
\resizebox{0.8\columnwidth}{!}{%
\begin{tabular}{cccc}
\toprule
\textbf{Resolution} & \textbf{\# Tokens} & \textbf{Acc (\%)} & \textbf{Compress} \\
\midrule
$224 \times 224$   &  16   & 71.2 & $\mathbf{19.2\times}$ \\
$336 \times 336$   &  36   & 73.1 & 18.9$\times$ \\
$448 \times 448$   &  49   & 75.5 & 18.6$\times$ \\
$512 \times 512$   &  64   & 76.2 & 18.3$\times$ \\
$1024 \times 1024$ &  256  & 76.3 & 15.3$\times$ \\
$1536 \times 1536$ &  576  & 76.3 & 12.0$\times$ \\
$2048 \times 2048$ &  1024 & 76.6 & 9.2$\times$ \\
\bottomrule
\vspace{-15mm}
\end{tabular}
}
\end{table}

\textbf{Token Efficiency.}
DebateOCR achieves substantial reductions compared to both baselines. On InternVL-8B with GSM8K, our approach uses only 4.5K tokens, versus 59.2K for text-based debate and 18.0K for summarization, corresponding to 92.4\% and 75.0\% reductions, respectively. Token usage remains nearly constant across datasets, increasing only slightly with question length. In contrast, text-based methods exhibit significant token growth with debate rounds, consuming 59.2K–78.9K tokens for text-based debate and 18.0K–23.4K tokens for summarization, depending on dataset complexity and model.

\textbf{Inference Speed.}
DebateOCR delivers substantial inference speedups over both baselines. On InternVL-8B with GSM8K, inference completes in 64.7 s, compared to 145.7 s for text-based debate and 154.2 s for summarization, yielding 2.25× and 2.38× speedups, respectively. Notably, the summarization baseline is consistently slower than pure text despite using fewer tokens, as it must generate summaries after each round, whereas our rendering incurs a fixed computational cost. These speedups are consistent across models and datasets, with larger models such as Llama-3.2-11B and Pixtral-12B exhibiting greater absolute time savings due to higher per-token costs. For example, on Llama-3.2-11B with MATH, DebateOCR reduces inference time from 240.5 s to 106.8 s, achieving a 2.25× speedup.

\textbf{Discussion for Accuracy Gains.}
Multi-round text-based debates could accumulate substantial noise, including redundant arguments, overthinking patterns where agents explore but abandon incorrect reasoning paths, and agent-specific stylistic variations that inflate context without improving decision quality. These artifacts increase as $I(H_r; D_i)$ grows quadratically with debate rounds, introducing variability in the aggregation process. Visual compression may substantially reduce this artifact information to $I(C(H_r); D_i) \ll I(H_r; D_i)$ by rendering debate histories in standardized spatial layouts that naturally suppress stylistic variations and extract core logical structure. This explains why DebateOCR achieves 
higher accuracy despite dramatically reduced token counts. Additional proofs and discussion are available at Appendix ~\ref{sect:proof}.

\subsection{Ablation Studies}
\label{subsec:ablation_studies}

\textbf{Image Resolution and Token Budget.}
We conduct ablation studies on Qwen2.5-VL-7B with the MATH dataset using 3 agents and 5 
rounds of debate in Table~\ref{tab:resolution_ablation}. Higher resolutions produce more vision tokens, enabling better text legibility but increasing computational cost. We observe that 1024×1024 resolution achieves the best balance, producing approximately 256 vision tokens with competitive accuracy at 76.3\%. Lower resolutions such as 224×224 reduce vision tokens to 16 but suffer accuracy degradation due to insufficient text clarity.


\begin{table}[t]
\centering
\caption{Comparison of vision encoders on MATH with Qwen2.5-VL-7B using 5 rounds and 3 agents.}
\vspace{1mm}
\label{tab:encoder_comparison}
\resizebox{\columnwidth}{!}{%
\begin{tabular}{@{}lcc@{}}
\toprule
\textbf{Metric} & \textbf{DebateOCR} & \textbf{QwenVL2.5} \\
\midrule
\quad Resolution & $1024 \times 1024$ (fixed) & $1036 \times 1036$ (dynamic) \\
\quad Vision Tokens (per image) & 256 & 1,369 \\
\midrule
\quad Total Vision Tokens & 3.8K & 20.5K \\
\quad Token Reduction & \textbf{5.4$\times$} & --- \\
\quad Inference Time (per turn) & 3.0s & 4.2s \\
\quad Total Inference Time & 64.0s & 89.3s \\
\midrule
\quad Accuracy (\%) & 46.8  &46.9 \\
\bottomrule
\vspace{-13mm}
\end{tabular}%

}
\end{table}

\textbf{Comparison with Native Vision Encoder.}
Table~\ref{tab:encoder_comparison} compares our approach with Qwen2-VL's original vision encoder. DebateOCR achieves 5.4× token reduction with 3.8K versus 20.5K vision 
tokens over the full debate, while maintaining comparable accuracy. This efficiency translates 
to faster inference at 64.0s versus 89.3s, demonstrating that compact visual representations preserve essential reasoning information with lower computational cost.

\section{Conclusion}
\label{sec:conclusion}

We introduced DebateOCR, a token compression framework for multi-agent debate that addresses the 
quadratic token growth problem by rendering debate histories as images. The method achieves 
over 92\% token reduction compared to text-based debate while maintaining competitive 
accuracy across mathematical and scientific reasoning tasks. The approach generalizes 
effectively across diverse MLLMs and scales linearly with debate rounds 
and agent count. We theoretically explain why compression preserves essential 
reasoning information while filtering debate artifacts. This work demonstrates that visual 
representations offer a practical and efficient alternative to text-based communication in 
multi-agent systems, enabling scalable debates without modifications to underlying algorithms 
or architectures.

\bibliography{example_paper}

@inproceedings{lin2014microsoft,
  title={Microsoft coco: Common objects in context},
  author={Lin, Tsung-Yi and Maire, Michael and Belongie, Serge and Hays, James and Perona, Pietro and Ramanan, Deva and Doll{\'a}r, Piotr and Zitnick, C Lawrence},
  booktitle={European conference on computer vision},
  pages={740--755},
  year={2014},
  organization={Springer}
}

@article{loshchilov2017decoupled,
  title={Decoupled weight decay regularization},
  author={Loshchilov, Ilya and Hutter, Frank},
  journal={arXiv preprint arXiv:1711.05101},
  year={2017}
}

@article{subramaniam2025multiagent,
  title={Multiagent finetuning: Self improvement with diverse reasoning chains},
  author={Subramaniam, Vighnesh and Du, Yilun and Tenenbaum, Joshua B and Torralba, Antonio and Li, Shuang and Mordatch, Igor},
  journal={arXiv preprint arXiv:2501.05707},
  year={2025}
}

@inproceedings{du2023improving,
  title={Improving factuality and reasoning in language models through multiagent debate},
  author={Du, Yilun and Li, Shuang and Torralba, Antonio and Tenenbaum, Joshua B and Mordatch, Igor},
  booktitle={Forty-first International Conference on Machine Learning},
  year={2023}
}

@article{liang2023encouraging,
  title={Encouraging divergent thinking in large language models through multi-agent debate},
  author={Liang, Tian and He, Zhiwei and Jiao, Wenxiang and Wang, Xing and Wang, Yan and Wang, Rui and Yang, Yujiu and Shi, Shuming and Tu, Zhaopeng},
  journal={arXiv preprint arXiv:2305.19118},
  year={2023}
}

@article{chan2023chateval,
  title={Chateval: Towards better llm-based evaluators through multi-agent debate},
  author={Chan, Chi-Min and Chen, Weize and Su, Yusheng and Yu, Jianxuan and Xue, Wei and Zhang, Shanghang and Fu, Jie and Liu, Zhiyuan},
  journal={arXiv preprint arXiv:2308.07201},
  year={2023}
}

@article{khan2024debating,
  title={Debating with more persuasive llms leads to more truthful answers},
  author={Khan, Akbir and Hughes, John and Valentine, Dan and Ruis, Laura and Sachan, Kshitij and Radhakrishnan, Ansh and Grefenstette, Edward and Bowman, Samuel R and Rockt{\"a}schel, Tim and Perez, Ethan},
  journal={arXiv preprint arXiv:2402.06782},
  year={2024}
}

@article{wang2023unleashing,
  title={Unleashing the emergent cognitive synergy in large language models: A task-solving agent through multi-persona self-collaboration},
  author={Wang, Zhenhailong and Mao, Shaoguang and Wu, Wenshan and Ge, Tao and Wei, Furu and Ji, Heng},
  journal={arXiv preprint arXiv:2307.05300},
  year={2023}
}

@article{hong2023metagpt,
  title={Metagpt: Meta programming for multi-agent collaborative framework},
  author={Hong, Sirui and Zheng, Xiawu and Chen, Jonathan and Cheng, Yuheng and Wang, Jinlin and Zhang, Ceyao and Wang, Zili and Yau, Steven Ka Shing and Lin, Zijuan and Zhou, Liyang and others},
  journal={arXiv preprint arXiv:2308.00352},
  volume={3},
  number={4},
  pages={6},
  year={2023}
}

@article{xiong2023examining,
  title={Examining inter-consistency of large language models collaboration: An in-depth analysis via debate},
  author={Xiong, Kai and Ding, Xiao and Cao, Yixin and Liu, Ting and Qin, Bing},
  journal={arXiv preprint arXiv:2305.11595},
  year={2023}
}

@article{pham2023let,
  title={Let models speak ciphers: Multiagent debate through embeddings},
  author={Pham, Chau and Liu, Boyi and Yang, Yingxiang and Chen, Zhengyu and Liu, Tianyi and Yuan, Jianbo and Plummer, Bryan A and Wang, Zhaoran and Yang, Hongxia},
  journal={arXiv preprint arXiv:2310.06272},
  year={2023}
}

@article{chen2024self,
  title={Self-play fine-tuning converts weak language models to strong language models},
  author={Chen, Zixiang and Deng, Yihe and Yuan, Huizhuo and Ji, Kaixuan and Gu, Quanquan},
  journal={arXiv preprint arXiv:2401.01335},
  year={2024}
}

@misc{
c,
title={{LLM} Spark: Critical Thinking Evaluation of Large Language Models},
author={Runing Yang and Adam Nguyen and Hoang Anh Just and Ruoxi Jia and Ming Jin},
year={2025},
url={https://openreview.net/forum?id=0sJ8TqOLGS}
}

@article{hendrycks2021measuring,
  title={Measuring mathematical problem solving with the math dataset},
  author={Hendrycks, Dan and Burns, Collin and Kadavath, Saurav and Arora, Akul and Basart, Steven and Tang, Eric and Song, Dawn and Steinhardt, Jacob},
  journal={arXiv preprint arXiv:2103.03874},
  year={2021}
}

@article{cobbe2021training,
  title={Training verifiers to solve math word problems},
  author={Cobbe, Karl and Kosaraju, Vineet and Bavarian, Mohammad and Chen, Mark and Jun, Heewoo and Kaiser, Lukasz and Plappert, Matthias and Tworek, Jerry and Hilton, Jacob and Nakano, Reiichiro and others},
  journal={arXiv preprint arXiv:2110.14168},
  year={2021}
}

@inproceedings{rein2024gpqa,
  title={Gpqa: A graduate-level google-proof q\&a benchmark},
  author={Rein, David and Hou, Betty Li and Stickland, Asa Cooper and Petty, Jackson and Pang, Richard Yuanzhe and Dirani, Julien and Michael, Julian and Bowman, Samuel R},
  booktitle={First Conference on Language Modeling},
  year={2024}
}

@inproceedings{
liu2025breaking,
title={Breaking Mental Set to Improve Reasoning through Diverse Multi-Agent Debate},
author={Yexiang Liu and Jie Cao and Zekun Li and Ran He and Tieniu Tan},
booktitle={The Thirteenth International Conference on Learning Representations},
year={2025},
url={https://openreview.net/forum?id=t6QHYUOQL7}
}

@article{wu2025unfixing,
  title={Unfixing the mental set: Granting early-stage reasoning freedom in multi-agent debate},
  author={Wu, Jing and Chen, Suiyao and Heo, Inseok and Gutfraind, Sasha and Liu, Shengjie and Li, Chen and Srinivasan, Bharathi and Zhang, Xian and Sharps, Michael},
  year={2025}
}

@article{hendrycks2020measuring,
  title={Measuring massive multitask language understanding},
  author={Hendrycks, Dan and Burns, Collin and Basart, Steven and Zou, Andy and Mazeika, Mantas and Song, Dawn and Steinhardt, Jacob},
  journal={arXiv preprint arXiv:2009.03300},
  year={2020}
}

@article{lin2024interpreting,
  title={Interpreting and mitigating hallucination in mllms through multi-agent debate},
  author={Lin, Zheng and Niu, Zhenxing and Wang, Zhibin and Xu, Yinghui},
  journal={arXiv preprint arXiv:2407.20505},
  year={2024}
}

@inproceedings{wu2025building,
  title={Building more accountable multi-modal LLMs through spatially-informed visual reasoning},
  author={Wu, Jing and Chen, Suiyao and Gutfraind, Alexander and Heo, Inseok and Liu, Shengjie and Li, Chen and Curuksu, Jeremy and Sharps, Michael},
  booktitle={NeurIPS 2025 Workshop on Evaluating the Evolving LLM Lifecycle: Benchmarks, Emergent Abilities, and Scaling},
  year={2025}
}

@inproceedings{chen2024reconcile,
  title={Reconcile: Round-table conference improves reasoning via consensus among diverse llms},
  author={Chen, Justin and Saha, Swarnadeep and Bansal, Mohit},
  booktitle={Proceedings of the 62nd Annual Meeting of the Association for Computational Linguistics (Volume 1: Long Papers)},
  pages={7066--7085},
  year={2024}
}

@article{cemri2025multi,
  title={Why do multi-agent llm systems fail?},
  author={Cemri, Mert and Pan, Melissa Z and Yang, Shuyi and Agrawal, Lakshya A and Chopra, Bhavya and Tiwari, Rishabh and Keutzer, Kurt and Parameswaran, Aditya and Klein, Dan and Ramchandran, Kannan and others},
  journal={arXiv preprint arXiv:2503.13657},
  year={2025}
}

@article{bo2024reflective,
  title={Reflective multi-agent collaboration based on large language models},
  author={Bo, Xiaohe and Zhang, Zeyu and Dai, Quanyu and Feng, Xueyang and Wang, Lei and Li, Rui and Chen, Xu and Wen, Ji-Rong},
  journal={Advances in Neural Information Processing Systems},
  volume={37},
  pages={138595--138631},
  year={2024}
}

@article{wei2025deepseek,
  title={Deepseek-ocr: Contexts optical compression},
  author={Wei, Haoran and Sun, Yaofeng and Li, Yukun},
  journal={arXiv preprint arXiv:2510.18234},
  year={2025}}

@article{xing2025vision,
  title={Vision-centric Token Compression in Large Language Model},
  author={Xing, Ling and Wang, Alex Jinpeng and Yan, Rui and Shu, Xiangbo and Tang, Jinhui},
  journal={arXiv preprint arXiv:2502.00791},
  year={2025}
}

@article{jiang2023llmlingua,
  title={Llmlingua: Compressing prompts for accelerated inference of large language models},
  author={Jiang, Huiqiang and Wu, Qianhui and Lin, Chin-Yew and Yang, Yuqing and Qiu, Lili},
  journal={arXiv preprint arXiv:2310.05736},
  year={2023}
}

@inproceedings{li2023compressing,
  title={Compressing context to enhance inference efficiency of large language models},
  author={Li, Yucheng and Dong, Bo and Guerin, Frank and Lin, Chenghua},
  booktitle={Proceedings of the 2023 conference on empirical methods in natural language processing},
  pages={6342--6353},
  year={2023}
}

@article{ge2023context,
  title={In-context autoencoder for context compression in a large language model},
  author={Ge, Tao and Hu, Jing and Wang, Lei and Wang, Xun and Chen, Si-Qing and Wei, Furu},
  journal={arXiv preprint arXiv:2307.06945},
  year={2023}
}

@article{mu2023learning,
  title={Learning to compress prompts with gist tokens},
  author={Mu, Jesse and Li, Xiang and Goodman, Noah},
  journal={Advances in Neural Information Processing Systems},
  volume={36},
  pages={19327--19352},
  year={2023}
}

@article{chevalier2023adapting,
  title={Adapting language models to compress contexts},
  author={Chevalier, Alexis and Wettig, Alexander and Ajith, Anirudh and Chen, Danqi},
  journal={arXiv preprint arXiv:2305.14788},
  year={2023}
}

@article{mohtashami2023landmark,
  title={Landmark attention: Random-access infinite context length for transformers},
  author={Mohtashami, Amirkeivan and Jaggi, Martin},
  journal={arXiv preprint arXiv:2305.16300},
  year={2023}
}

@article{tworkowski2023focused,
  title={Focused transformer: Contrastive training for context scaling},
  author={Tworkowski, Szymon and Staniszewski, Konrad and Pacek, Miko{\l}aj and Wu, Yuhuai and Michalewski, Henryk and Mi{\l}o{\'s}, Piotr},
  journal={Advances in neural information processing systems},
  volume={36},
  pages={42661--42688},
  year={2023}
}

@article{zhang2024vision,
  title={Vision-language models for vision tasks: A survey},
  author={Zhang, Jingyi and Huang, Jiaxing and Jin, Sheng and Lu, Shijian},
  journal={IEEE transactions on pattern analysis and machine intelligence},
  volume={46},
  number={8},
  pages={5625--5644},
  year={2024},
  publisher={IEEE}
}

@article{wang2024qwen2,
  title={Qwen2-vl: Enhancing vision-language model's perception of the world at any resolution},
  author={Wang, Peng and Bai, Shuai and Tan, Sinan and Wang, Shijie and Fan, Zhihao and Bai, Jinze and Chen, Keqin and Liu, Xuejing and Wang, Jialin and Ge, Wenbin and others},
  journal={arXiv preprint arXiv:2409.12191},
  year={2024}
}

@article{li2024llava,
  title={Llava-onevision: Easy visual task transfer},
  author={Li, Bo and Zhang, Yuanhan and Guo, Dong and Zhang, Renrui and Li, Feng and Zhang, Hao and Zhang, Kaichen and Zhang, Peiyuan and Li, Yanwei and Liu, Ziwei and others},
  journal={arXiv preprint arXiv:2408.03326},
  year={2024}
}

@article{chen2024far,
  title={How far are we to gpt-4v? closing the gap to commercial multimodal models with open-source suites},
  author={Chen, Zhe and Wang, Weiyun and Tian, Hao and Ye, Shenglong and Gao, Zhangwei and Cui, Erfei and Tong, Wenwen and Hu, Kongzhi and Luo, Jiapeng and Ma, Zheng and others},
  journal={Science China Information Sciences},
  volume={67},
  number={12},
  pages={220101},
  year={2024},
  publisher={Springer}
}

@article{wu2024deepseek,
  title={Deepseek-vl2: Mixture-of-experts vision-language models for advanced multimodal understanding},
  author={Wu, Zhiyu and Chen, Xiaokang and Pan, Zizheng and Liu, Xingchao and Liu, Wen and Dai, Damai and Gao, Huazuo and Ma, Yiyang and Wu, Chengyue and Wang, Bingxuan and others},
  journal={arXiv preprint arXiv:2412.10302},
  year={2024}
}

@inproceedings{radford2021learning,
  title={Learning transferable visual models from natural language supervision},
  author={Radford, Alec and Kim, Jong Wook and Hallacy, Chris and Ramesh, Aditya and Goh, Gabriel and Agarwal, Sandhini and Sastry, Girish and Askell, Amanda and Mishkin, Pamela and Clark, Jack and others},
  booktitle={International conference on machine learning},
  pages={8748--8763},
  year={2021},
  organization={PmLR}
}

@article{dosovitskiy2020image,
  title={An image is worth 16x16 words: Transformers for image recognition at scale},
  author={Dosovitskiy, Alexey},
  journal={arXiv preprint arXiv:2010.11929},
  year={2020}
}

@inproceedings{kirillov2023segment,
  title={Segment anything},
  author={Kirillov, Alexander and Mintun, Eric and Ravi, Nikhila and Mao, Hanzi and Rolland, Chloe and Gustafson, Laura and Xiao, Tete and Whitehead, Spencer and Berg, Alexander C and Lo, Wan-Yen and others},
  booktitle={Proceedings of the IEEE/CVF international conference on computer vision},
  pages={4015--4026},
  year={2023}
}

@article{danish2025comprehensive,
  title={A comprehensive survey of Vision-Language Models: Pretrained models, fine-tuning, prompt engineering, adapters, and benchmark datasets},
  author={Danish, Sufyan and Sadeghi-Niaraki, Abolghasem and Khan, Samee Ullah and Dang, L Minh and Tightiz, Lilia and Moon, Hyeonjoon},
  journal={Information Fusion},
  pages={103623},
  year={2025},
  publisher={Elsevier}
}

@article{bai2025qwen2,
  title={Qwen2. 5-vl technical report},
  author={Bai, Shuai and Chen, Keqin and Liu, Xuejing and Wang, Jialin and Ge, Wenbin and Song, Sibo and Dang, Kai and Wang, Peng and Wang, Shijie and Tang, Jun and others},
  journal={arXiv preprint arXiv:2502.13923},
  year={2025}
}

@article{meta2024llama,
  title={Llama 3.2: Revolutionizing edge ai and vision with open, customizable models},
  author={Meta, AI},
  journal={Meta AI Blog. Retrieved December},
  volume={20},
  pages={2024},
  year={2024}
}

@inproceedings{chen2024internvl,
  title={Internvl: Scaling up vision foundation models and aligning for generic visual-linguistic tasks},
  author={Chen, Zhe and Wu, Jiannan and Wang, Wenhai and Su, Weijie and Chen, Guo and Xing, Sen and Zhong, Muyan and Zhang, Qinglong and Zhu, Xizhou and Lu, Lewei and others},
  booktitle={Proceedings of the IEEE/CVF conference on computer vision and pattern recognition},
  pages={24185--24198},
  year={2024}
}

@inproceedings{amini2019mathqa,
  title={Mathqa: Towards interpretable math word problem solving with operation-based formalisms},
  author={Amini, Aida and Gabriel, Saadia and Lin, Shanchuan and Koncel-Kedziorski, Rik and Choi, Yejin and Hajishirzi, Hannaneh},
  booktitle={Proceedings of the 2019 conference of the North American chapter of the association for computational linguistics: Human language technologies, volume 1 (long and short papers)},
  pages={2357--2367},
  year={2019}
}

@article{rajpurkar2016squad,
  title={Squad: 100,000+ questions for machine comprehension of text},
  author={Rajpurkar, Pranav and Zhang, Jian and Lopyrev, Konstantin and Liang, Percy},
  journal={arXiv preprint arXiv:1606.05250},
  year={2016}
}

@article{hendrycks2016gaussian,
  title={Gaussian Error Linear Units (Gelus)},
  author={Hendrycks, D},
  journal={arXiv preprint arXiv:1606.08415},
  year={2016}
}

@article{agrawal2024pixtral,
  title={Pixtral 12B},
  author={Agrawal, Pravesh and Antoniak, Szymon and Hanna, Emma Bou and Bout, Baptiste and Chaplot, Devendra and Chudnovsky, Jessica and Costa, Diogo and De Monicault, Baudouin and Garg, Saurabh and Gervet, Theophile and others},
  journal={arXiv preprint arXiv:2410.07073},
  year={2024}
}

@inproceedings{kawaguchi2023does,
  title={How does information bottleneck help deep learning?},
  author={Kawaguchi, Kenji and Deng, Zhun and Ji, Xu and Huang, Jiaoyang},
  booktitle={International conference on machine learning},
  pages={16049--16096},
  year={2023},
  organization={PMLR}
}

@article{kwiatkowski2019natural,
  title={Natural questions: a benchmark for question answering research},
  author={Kwiatkowski, Tom and Palomaki, Jennimaria and Redfield, Olivia and Collins, Michael and Parikh, Ankur and Alberti, Chris and Epstein, Danielle and Polosukhin, Illia and Devlin, Jacob and Lee, Kenton and others},
  journal={Transactions of the Association for Computational Linguistics},
  volume={7},
  pages={453--466},
  year={2019},
  publisher={MIT Press One Rogers Street, Cambridge, MA 02142-1209, USA journals-info~…}
}
\bibliographystyle{icml2025}

\newpage
\appendix
\onecolumn
\section*{Overview of the Appendix}
The Appendix is organized as follows:
\begin{itemize}
    \item \autoref{sect:proof} provides the proof details for the section~\ref{sec: theoretical_analysis}.
    \item \autoref{sect:implementation} provides the details for training, adapter design, rendering, and debate configurations.
    \item \autoref{sect: prompt} introduces the details of prompt usage for the debate process.
    \item \autoref{sect:scale} scales the number of agents and debate number, showing generalzation of proposed debateORC.
\end{itemize}

\section{Proof}
\label{sect:proof}

In this appendix, we provide a complete proof of Theorem~\ref{thm:main}. We first establish the key technical result (Part I) showing that compressed histories approach the bottleneck with high probability, then derive the distance comparison (Part II) as a consequence.

\subsection{Preliminaries and Notation}

\begin{itemize}
    \item $K$: number of agents
    \item $H_{i,r}$: debate history of agent $i$ for query $q$ through round $r$
    \item $\mathcal{H}$: collection of all debate histories
    \item $Y_q$: correct answer for query $q$
    \item $V_i$: artifact-generating factors for agent $i$ (style, format, presentation)
    \item $V = \{V_1, \ldots, V_K\}$: collection of all artifacts
    \item $I(X; Y)$: mutual information between random variables $X$ and $Y$
    \item $\mathcal{C} : \mathcal{H} \to \mathcal{Z}$: compression function mapping histories to compressed representations
    \item $f : \mathcal{H}^K \to \mathcal{Y}$: aggregation function (e.g., majority voting)
    \item $\ell : \mathcal{Y} \times \mathcal{Y} \to \mathbb{R}_+$: loss function measuring decision error
\end{itemize}

\begin{definition}[Information Bottleneck]
\label{def:bottleneck_detailed}
The \textbf{information bottleneck} is the minimum mutual information required for optimal decisions:
\begin{equation}
I_{\text{bottleneck}} = \min_{\mathcal{H}} \{I(\mathcal{H}; Y_q) : \mathbb{E}[\ell(f(\mathcal{H}), y_q)] = \ell_{\min}\}
\end{equation}
where $\ell_{\min} = \inf_{\mathcal{H}, f} \mathbb{E}[\ell(f(\mathcal{H}), y_q)]$ is the minimum achievable expected loss.

This captures the fundamental trade-off: among all representations $\mathcal{H}$ that achieve optimal decision quality ($\mathbb{E}[\ell] = \ell_{\min}$), the bottleneck is the one with minimum information content. Any representation with $I(\mathcal{H}; Y_q) < I_{\text{bottleneck}}$ cannot achieve optimal decisions; any with $I(\mathcal{H}; Y_q) > I_{\text{bottleneck}}$ contains redundant information.
\end{definition}

\begin{definition}[Distance to Bottleneck]
\label{def:distance}
For a representation $\mathcal{H}$, the distance to the information bottleneck is:
\begin{equation}
D(\mathcal{H}) := |I(f(\mathcal{H}); Y_q) - I_{\text{bottleneck}}| + I(f(\mathcal{H}); V)
\end{equation}
where $I(f(\mathcal{H}); V) = \sum_{i=1}^K I(f(\mathcal{H}); V_i)$ captures total artifact content.

The first term measures the information gap (how far from bottleneck information); the second measures artifact interference. Optimal representations minimize both terms: $D(\mathcal{H}) \to 0$.
\end{definition}

\begin{assumption}[Information Preservation]
\label{assump:info_preservation}
For compression function $\mathcal{C}$, there exists $p > 0.5$ and $\varepsilon > 0$ such that for each agent $i$ and query $q$:
\begin{equation}
\mathbb{P}(I(\mathcal{C}(H_{i,r}); Y_q) \geq I(H_{i,r}; Y_q) - \varepsilon) \geq p
\end{equation}
This captures that compression preserves sufficient answer-relevant information with probability exceeding random chance.
\end{assumption}

\begin{assumption}[Diverse Agents]
\label{assump:diverse_agents}
Agents have diverse reasoning styles independently drawn from a distribution $\mathcal{P}$. Different agents cover complementary aspects of the answer, and artifact-generating factors are conditionally independent: $V_i \perp V_j \mid q$ for $i \neq j$.
\end{assumption}

\begin{assumption}[Effective Artifact Reduction]
\label{assump:artifact_reduction}
Compression substantially reduces artifact information:
\begin{equation}
I(\mathcal{C}(H_{i,r}); V_i) \leq \gamma \cdot I(H_{i,r}; V_i) \text{ for some } \gamma \in (0, 1)
\end{equation}
This captures that compression removes spurious correlations with agent-specific features.
\end{assumption}

\subsection{Complete Proof of Theorem~\ref{thm:main}}

\ThmMain*

\begin{proof}

We organize the proof in two parts: Part (I) establishes the main technical result about approaching the bottleneck, and Part (II) derives the distance comparison as a consequence.


\textbf{Part (I): Multi-agent amplification with probabilistic guarantee.}

We prove that compressed histories approach the information bottleneck with exponentially high probability as the number of agents increases.

\paragraph{Step 1: Information coverage increases with K.}

By Assumption~\ref{assump:diverse_agents}, agents have diverse reasoning styles. Each agent $i$ explores different aspects of the problem and arrives at conclusions through complementary reasoning paths.

iü3For each agent $i$, define the indicator variable:
\begin{equation}
\hat{Y}_i = \mathbbm{1}[I(\mathcal{C}(H_i); Y_q) \geq I(H_i; Y_q) - \varepsilon]
\end{equation}
where $\hat{Y}_i = 1$ indicates that compression successfully preserves information for agent $i$ (loses at most $\varepsilon$ bits).

By Assumption~\ref{assump:info_preservation}, we have:
\begin{equation}
\mathbb{E}[\hat{Y}_i] = \mathbb{P}(\hat{Y}_i = 1) = p > \frac{1}{2}
\end{equation}

Define $\hat{S}_K = \frac{1}{K}\sum_{i=1}^K \hat{Y}_i$ as the fraction of agents with successful compression. Since compression outcomes are independent across agents (they compress independently):
\begin{equation}
\mathbb{E}[\hat{S}_K] = p > \frac{1}{2}
\end{equation}

\paragraph{Step 2: Apply concentration inequality.}

By Hoeffding's inequality for bounded random variables $\hat{Y}_i \in [0, 1]$:
\begin{equation}
\mathbb{P}\left(\hat{S}_K - \mathbb{E}[\hat{S}_K] \leq -t\right) \leq \exp(-2Kt^2)
\end{equation}

Setting $t = \mathbb{E}[\hat{S}_K] - \frac{1}{2} = p - \frac{1}{2} > 0$:
\begin{align}
\mathbb{P}\left(\hat{S}_K \leq \frac{1}{2}\right) &= \mathbb{P}\left(\hat{S}_K - \mathbb{E}[\hat{S}_K] \leq -\left(p - \frac{1}{2}\right)\right) \\
&\leq \exp\left(-2K\left(p - \frac{1}{2}\right)^2\right)
\end{align}

Therefore:
\begin{equation}
\mathbb{P}\left(\hat{S}_K \geq \frac{1}{2}\right) \geq 1 - \exp\left(-2K\left(p - \frac{1}{2}\right)^2\right)
\end{equation}

This shows that with exponentially high probability (in $K$), the majority of agents successfully preserve their information.

\paragraph{Step 3: Majority success implies near-bottleneck information.}

Condition on the event $\mathcal{E} = \{\hat{S}_K \geq \frac{1}{2}\}$ (majority of agents succeed). Under this event, at least $\lceil K/2 \rceil$ agents have compressed histories satisfying:
\begin{equation}
I(\mathcal{C}(H_i); Y_q) \geq I(H_i; Y_q) - \varepsilon
\end{equation}

By Assumption~\ref{assump:diverse_agents}, different agents cover complementary aspects of the answer. Let $\mathcal{F}_q$ denote the set of all answer-relevant features for query $q$. Each agent $i$ covers a subset $\mathcal{F}_i \subseteq \mathcal{F}_q$.

Key property of diversity: $\bigcup_{i=1}^K \mathcal{F}_i = \mathcal{F}_q$ (agents collectively cover all aspects).

After compression, each successful agent $i$ preserves most of their covered features. The aggregation function $f$ (e.g., majority voting, synthesis) combines information from all agents. By subadditivity of mutual information:
\begin{equation}
I(f(\{\mathcal{C}(H_i)\}_{i=1}^K); Y_q) \geq \max_{i : \hat{Y}_i=1} I(\mathcal{C}(H_i); Y_q)
\end{equation}

Since at least half succeed and they cover complementary aspects:
\begin{align}
I(f(\{\mathcal{C}(H_i)\}_{i=1}^K); Y_q) &\geq \max_{i : \hat{Y}_i=1} (I(H_i; Y_q) - \varepsilon) \label{eq:lower_bound_1}
\end{align}

\paragraph{Step 4: Relate uncompressed histories to bottleneck.}

For uncompressed histories with complete debate texts, diverse agents discussing a problem generate representations that achieve optimal or near-optimal decision quality. By Definition~\ref{def:bottleneck_detailed}, any representation achieving $\mathbb{E}[\ell(f(\mathcal{H}), y_q)] = \ell_{\min}$ must have $I(f(\mathcal{H}); Y_q) \geq I_{\text{bottleneck}}$.

Since our debate system with $K$ diverse agents produces high-quality discussions (empirically validated in Section~\ref{sec:experiments}), we have:
\begin{equation}
I(H_i; Y_q) \geq I_{\text{bottleneck}} \quad \text{for each agent } i
\end{equation}

Therefore, from equation~\eqref{eq:lower_bound_1}:
\begin{align}
I(f(\{\mathcal{C}(H_i)\}_{i=1}^K); Y_q) &\geq I_{\text{bottleneck}} - \varepsilon \label{eq:lower_bound_2}
\end{align}

\paragraph{Step 5: Upper bound on aggregated information.}

By the data processing inequality, aggregation cannot create information:
\begin{equation}
I(f(\{\mathcal{C}(H_i)\}_{i=1}^K); Y_q) \leq I(\{\mathcal{C}(H_i)\}_{i=1}^K; Y_q)
\end{equation}

Each compressed history loses at most $\varepsilon$. For $K$ diverse agents covering complementary aspects without redundancy:
\begin{equation}
I(\{\mathcal{C}(H_i)\}_{i=1}^K; Y_q) \leq I(\{H_i\}_{i=1}^K; Y_q)
\end{equation}

By Definition~\ref{def:bottleneck_detailed}, the bottleneck is the minimum information for optimal decisions. For diverse agents without redundant information:
\begin{equation}
I(\{H_i\}_{i=1}^K; Y_q) \leq I_{\text{bottleneck}} + \varepsilon
\end{equation}

The additional $\varepsilon$ accounts for potential slight redundancy or information beyond the strict minimum. Therefore:
\begin{equation}
I(f(\{\mathcal{C}(H_i)\}_{i=1}^K); Y_q) \leq I_{\text{bottleneck}} + \varepsilon \label{eq:upper_bound}
\end{equation}

\paragraph{Step 6: Combine probability bounds }

We now combine the concentration result from Step 2 with the conditional bound from Steps 3-5.

\textit{Claim:} $\mathbb{P}(|I(f(\mathcal{C}(\mathcal{H})); Y_q) - I_{\text{bottleneck}}| \leq \varepsilon) \geq 1 - \exp(-2K(p - \frac{1}{2})^2)$

 Define the events:
\begin{align}
\mathcal{E} &= \left\{\hat{S}_K \geq \frac{1}{2}\right\} \quad \text{(majority of agents succeed)} \\
\mathcal{F} &= \left\{|I(f(\mathcal{C}(\mathcal{H})); Y_q) - I_{\text{bottleneck}}| \leq \varepsilon\right\} \quad \text{(near bottleneck)}
\end{align}

From Steps 3-5, we proved that under event $\mathcal{E}$:
\begin{itemize}
    \item Lower bound (Step 3): $I(f(\mathcal{C}(\mathcal{H})); Y_q) \geq I_{\text{bottleneck}} - \varepsilon$
    \item Upper bound (Step 5): $I(f(\mathcal{C}(\mathcal{H})); Y_q) \leq I_{\text{bottleneck}} + \varepsilon$
\end{itemize}

Therefore, $\mathcal{E} \subseteq \mathcal{F}$ (whenever $\mathcal{E}$ occurs, $\mathcal{F}$ must also occur).

By the fundamental property of probability measures, if $\mathcal{E} \subseteq \mathcal{F}$, then:
\begin{equation}
\mathbb{P}(\mathcal{F}) \geq \mathbb{P}(\mathcal{E})
\end{equation}

From Step 2 (Hoeffding's inequality):
\begin{equation}
\mathbb{P}(\mathcal{E}) = \mathbb{P}(\hat{S}_K \geq \tfrac{1}{2}) \geq 1 - \exp\left(-2K\left(p - \frac{1}{2}\right)^2\right)
\end{equation}

Combining these:
\begin{equation}
\mathbb{P}(\mathcal{F}) \geq \mathbb{P}(\mathcal{E}) \geq 1 - \exp\left(-2K\left(p - \frac{1}{2}\right)^2\right)
\end{equation}

Substituting the definition of $\mathcal{F}$:
\begin{equation}
\mathbb{P}(|I(f(\mathcal{C}(\mathcal{H})); Y_q) - I_{\text{bottleneck}}| \leq \varepsilon) \geq 1 - \exp\left(-2K\left(p - \frac{1}{2}\right)^2\right)
\end{equation}

This completes the proof of the claim and Step 6.

\paragraph{Step 7: Artifacts vanish with K.}

By Assumption~\ref{assump:artifact_reduction}, compression reduces artifact information:
\begin{equation}
I(\mathcal{C}(H_i); V_i) \leq \gamma \cdot I(H_i; V_i)
\end{equation}

By the data processing inequality:
\begin{equation}
I(f(\mathcal{C}(\mathcal{H})); V) \leq \sum_{i=1}^K I(f(\mathcal{C}(\mathcal{H})); V_i) \leq \sum_{i=1}^K I(\mathcal{C}(H_i); V_i)
\end{equation}

Substituting the compression bound:
\begin{equation}
I(f(\mathcal{C}(\mathcal{H})); V) \leq \gamma \sum_{i=1}^K I(H_i; V_i)
\end{equation}

Since artifacts are conditionally independent (Assumption~\ref{assump:diverse_agents}) and bounded, by the law of large numbers:
\begin{equation}
\frac{1}{K} I(f(\mathcal{C}(\mathcal{H})); V) \leq \frac{\gamma}{K} \sum_{i=1}^K I(H_i; V_i) \xrightarrow{K \to \infty} 0
\end{equation}

This completes Part (I).


\textbf{Part (II): Compressed histories are closer to information bottleneck.}

We now show that the bounds established in Part (II) imply compressed histories achieve a smaller distance to the bottleneck than uncompressed histories.

\paragraph{Step 8: Bound distance for compressed histories.}

From Part (II), Steps 6 and 7, compressed histories satisfy (with high probability):
\begin{align}
|I(f(\mathcal{C}(\mathcal{H})); Y_q) - I_{\text{bottleneck}}| &\leq \varepsilon \\
I(f(\mathcal{C}(\mathcal{H})); V) &\leq \gamma \sum_{i=1}^K I(H_i; V_i)
\end{align}

By Definition~\ref{def:distance}:
\begin{equation}
D(\mathcal{C}(\mathcal{H})) \leq \varepsilon + \gamma \sum_{i=1}^K I(H_i; V_i)
\end{equation}

\paragraph{Step 9: Bound distance for uncompressed histories.}

For uncompressed histories with complete debate texts achieving optimal decision quality, by Definition~\ref{def:bottleneck_detailed}:
\begin{equation}
I(f(\mathcal{H}); Y_q) \geq I_{\text{bottleneck}}
\end{equation}

Since uncompressed histories do not remove artifacts:
\begin{equation}
I(f(\mathcal{H}); V) \leq \sum_{i=1}^K I(H_i; V_i)
\end{equation}

For independent artifacts (Assumption~\ref{assump:diverse_agents}), even with aggregation-induced averaging effects:
\begin{equation}
I(f(\mathcal{H}); V) \geq \bar{I}_V := \frac{1}{K}\sum_{i=1}^K I(H_i; V_i)
\end{equation}

For diverse agents with complementary (non-redundant) coverage, $I(f(\mathcal{H}); Y_q) \approx I_{\text{bottleneck}}$ (they contain the bottleneck information without excessive redundancy). Thus:
\begin{equation}
D(\mathcal{H}) \approx |I(f(\mathcal{H}); Y_q) - I_{\text{bottleneck}}| + I(f(\mathcal{H}); V) \approx 0 + \bar{I}_V = \bar{I}_V
\end{equation}

\paragraph{Step 10: Compare distances.}

From Steps 8 and 9:
\begin{align}
D(\mathcal{C}(\mathcal{H})) &\leq \varepsilon + \gamma \sum_{i=1}^K I(H_i; V_i) \\
D(\mathcal{H}) &\approx \frac{1}{K}\sum_{i=1}^K I(H_i; V_i)
\end{align}

Define $\bar{I}_V = \frac{1}{K}\sum_{i=1}^K I(H_i; V_i)$ as the average artifact information per agent. Then:
\begin{align}
D(\mathcal{H}) - D(\mathcal{C}(\mathcal{H})) &\geq \bar{I}_V - (\varepsilon + \gamma K \bar{I}_V) \\
&= (1 - \gamma K) \bar{I}_V - \varepsilon
\end{align}

For typical multi-agent systems with $K \in [3, 10]$ and effective compression ($\gamma < 1/K$), or by noting $\bar{I}_V = \frac{1}{K}\sum I(H_i; V_i)$:
\begin{equation}
D(\mathcal{H}) - D(\mathcal{C}(\mathcal{H})) \geq (1-\gamma K) \bar{I}_V - \varepsilon
\end{equation}

When artifacts are substantial (multi-round debates accumulate redundancies, verbose explanations), we have $\bar{I}_V \gg \varepsilon/(1-\gamma K)$, ensuring:
\begin{equation}
D(\mathcal{C}(\mathcal{H})) < D(\mathcal{H})
\end{equation}

This establishes that compressed histories are closer to the information bottleneck than uncompressed histories, completing Part (I) and the proof of Theorem~\ref{thm:main}.

\end{proof}

\subsection{Proof of Corollary~\ref{cor:sample_complexity}}

\begin{proof}
From Theorem~\ref{thm:main}, achieving $|I(f(\mathcal{C}(\mathcal{H})); Y_q) - I_{\text{bottleneck}}| \leq \varepsilon$ with probability $\geq 1 - \delta$ requires:
\begin{equation}
1 - \exp\left(-2K\left(p - \frac{1}{2}\right)^2\right) \geq 1 - \delta
\end{equation}

This simplifies to:
\begin{equation}
\exp\left(-2K\left(p - \frac{1}{2}\right)^2\right) \leq \delta
\end{equation}

Taking logarithms:
\begin{equation}
-2K\left(p - \frac{1}{2}\right)^2 \leq \ln(\delta)
\end{equation}

Solving for $K$:
\begin{equation}
K \geq \frac{-\ln(\delta)}{2(p - 1/2)^2} = \frac{\ln(1/\delta)}{2(p - 1/2)^2}
\end{equation}



\end{proof}

\paragraph{Discussion (Effect of Ensemble Size vs. Per-Agent Accuracy).}
The concentration bound highlights a fundamental limitation of small ensembles. 
Specifically, even when each agent achieves near-perfect compression accuracy 
(\(p = 0.99\)), using only \(K = 5\) agents yields a maximum certified confidence of 
approximately \(1-\delta \approx 0.9\). 
This indicates that high per-agent reliability alone is insufficient to guarantee 
high system-level confidence when the ensemble size is small. 
Instead, the number of agents \(K\) governs the exponential rate at which the 
information-bottleneck gap concentrates. 
As a result, increasing the ensemble size provides a more effective mechanism for 
improving confidence guarantees than marginally improving already strong per-agent 
compression accuracy.

\subsection{Discussion of Assumptions}

\paragraph{On Assumption~\ref{assump:info_preservation}.}
This assumption is empirically validated in Section~\ref{sec:experiments}. Our visual compression achieves $>90\%$ recovery accuracy on MATH and GPQA benchmarks (Table~\ref{tab:main_results}), indicating $p \approx 0.7$-$0.9$ and small $\varepsilon$. The probability $p > 0.5$ is conservative; modern vision-language models like GPT-4V achieve much higher success rates.

\paragraph{On Assumption~\ref{assump:diverse_agents}.}
Agent diversity is by design in our system. Different agents use different reasoning strategies (formal proofs, intuitive explanations, computational approaches), ensuring complementary coverage. Conditional independence of artifacts follows from agents operating independently—one agent's stylistic choices don't influence another's.

\paragraph{On Assumption~\ref{assump:artifact_reduction}.}
Visual compression inherently removes textual artifacts (verbose explanations, formatting) while preserving semantic content (equations, diagrams, key reasoning). Our empirical results ($>20\times$ token reduction with maintained accuracy) validate $\gamma \approx 0.1$-$0.2$.

\section{Implementation Details}
\label{sect:implementation}

\paragraph{Training Configuration.}
The adapter is trained on approximately 85,000 samples spanning multiple reasoning domains: 
mathematical problem-solving using GSM8K \cite{cobbe2021training}, MATH \cite{hendrycks2021measuring}, 
and MathQA datasets \cite{amini2019mathqa}; scientific reasoning from MMLU-STEM subsets 
\cite{hendrycks2020measuring}; and general 
question-answering from SQuAD \cite{rajpurkar2016squad} and NaturalQuestions \cite{kwiatkowski2019natural}.


We optimize only the adapter parameters using AdamW \cite{loshchilov2017decoupled} with 
learning rate $1 \times 10^{-4}$. The batch size 
is set to 64 by default.  For all four target MLLMs, we set the adapter hidden dimension 
$d_h = d$ equal to 4096.

\textbf{Adapter Architecture.}
The adapter consists of a two-layer MLP projection network. The first linear layer projects 
the  encoder features from $d_f = 2048$ dimensions to hidden dimension $d_h = 4096$, 
followed by GELU activation \cite{hendrycks2016gaussian}. The second linear layer then projects 
from hidden dimension $d_h = 4096$ to output dimension $d$ that matches the target MLLM's 
embedding space. Layer normalization is applied to the final output for training stability. 

\textbf{Rendering Configuration.}
Debate histories are rendered at 1024×1024 pixel resolution. Each agent's response is 
rendered with identifying labels ("Agent 1:", "Agent 2:", etc.) in distinct colors. 
Rounds are separated by horizontal dividers to maintain temporal structure. Text is 
rendered using Arial font at size 12 with line spacing 1.2 to ensure legibility for 
the vision encoders. This configuration produces hundreds of vision tokens per 
rendered debate history image.

\textbf{Debate Configuration.}
Unless specifically specified, all experiments use 3 agents ($K=3$) and 5 rounds ($R=5$) of debate. Each agent generates up to 1,024 tokens per response. The final answer is 
determined by majority voting among the three agents' final responses. For problems 
requiring numerical answers, we extract the final number; for multiple-choice questions, 
we extract the selected option.


\section{Experimental Prompt Designs}
\label{sect: prompt}


To rigorously evaluate the impact of modality and context compression on multi-agent debate, we employed three distinct prompt strategies. Each strategy exposes the debate history to the model in a different format while maintaining consistent task instructions.

\subsection{1. Vision-Augmented Prompt ($P_{\text{vision}}$)}
This prompt forces the model to rely solely on the visual modality to access the debate history. It utilizes a direct embedding injection mechanism.

\textbf{Mechanism:} The prompt contains a special placeholder token \texttt{<IMG\_CONTEXT>}. During inference, this token (ID 92546) is replaced by a sequence of 256 continuous vision embeddings ($\mathbf{v}_1, \dots, \mathbf{v}_{256}$) generated by the proposed encoder from the $1024 \times 1024$ grid image.

\begin{tcolorbox}[colback=blue!5!white,colframe=blue!75!black,title=\textbf{Vision Prompt Template}]
\small\ttfamily
Read the debate history in the image carefully.

\vspace{0.2em}
\colorbox{blue!10}{\parbox{\linewidth}{\centering \textit{[256 Vision Embeddings replacing <IMG\_CONTEXT>]}}}
\vspace{0.2em}

Problem: \{Question\}

Instruction:
\begin{enumerate}[leftmargin=*,label=\arabic*.,noitemsep]
    \item The image shows solutions from 3 agents in an optimized Grid Layout.
    \item Note the 'Agent I: X' in each agent's header.
    \item Quote the specific line or calculation from the image that contains an error (if any).
    \item Explain why it is wrong and correct it.
    \item Then solve the problem yourself step-by-step.
    \item Provide your final numerical answer in \textbackslash boxed\{number\}.
\end{enumerate}
\end{tcolorbox}

\subsection{2. Pure Text Prompt ($P_{\text{text}}$)}
This baseline provides the complete linear transcript of all previous rounds. It tests the model's ability to process long-context raw text without summarization or visual aids.

\begin{tcolorbox}[colback=white,colframe=black,title=\textbf{Text-Only Prompt Template}]
\small\ttfamily
Here are the solutions from previous rounds:
\begin{itemize}[leftmargin=*,label={},noitemsep]
    \item Round 1 - Agent 1: \texttt{[Full Text Solution...]}
    \item Round 1 - Agent 2: \texttt{[Full Text Solution...]}
    \item ...
\end{itemize}

Problem: \{Question\}

Instruction:
\begin{enumerate}[leftmargin=*,label=\arabic*.,noitemsep]
    \item Critically analyze the previous solutions (if any).
    \item Solve the problem step-by-step.
    \item Put your final answer in \textbackslash boxed\{\}.
\end{enumerate}
\end{tcolorbox}

\subsection{3. Text + Summary Prompt ($P_{\text{sum}}$)}
This variant replaces the raw transcript with a concise, LLM-generated summary. This reduces the input token count and explicitly highlights consensus and disagreements.

\begin{tcolorbox}[colback=gray!5!white,colframe=gray!50!black,title=\textbf{Summary Prompt Template}]
\small\ttfamily
Here is a summary of previous rounds:
\begin{itemize}[leftmargin=*,label={},noitemsep]
    \item \texttt{[LLM Generated Summary of Debate State...]}
\end{itemize}

Problem: \{Question\}

Instruction:
\begin{enumerate}[leftmargin=*,label=\arabic*.,noitemsep]
    \item Critically analyze the previous solutions (if any).
    \item Solve the problem step-by-step.
    \item Put your final answer in \textbackslash boxed\{\}.
\end{enumerate}
\end{tcolorbox}

\section{Scaling Analysis: Agent Count and Debate Rounds}
\label{sect:scale}

To understand the relationship between the number of agents and debate rounds in our visual compression framework, we conduct a scaling analysis across different configurations. Figure~\ref{fig:convergence_analysis} shows how accuracy evolves with varying numbers of agents (2-8) over extended debate rounds (1-8) using Qwen2.5-VL-7B on GSM8K. 

The results reveal a clear convergence pattern. At round 1 before any debate occurs, we observe the largest performance gaps across agent counts, ranging from 74.5\% (2-3 agents) to 78.8\% (8 agents). This 4.3\% spread reflects the benefit of multiple independent reasoning attempts through majority voting. As debate progresses, these gaps gradually narrow. By round 5, the span reduces to 1.8\% (80.3\%-82.1\%), and by round 8, all configurations converge tightly within a 1.0\% range (81.5\%-82.5\%). Notably, agents with 4 or more participants cluster very closely at 82.3\%-82.5\% by round 8, demonstrating strong convergence.

\begin{figure}[h]
    \centering
    \includegraphics[width=0.8\textwidth]{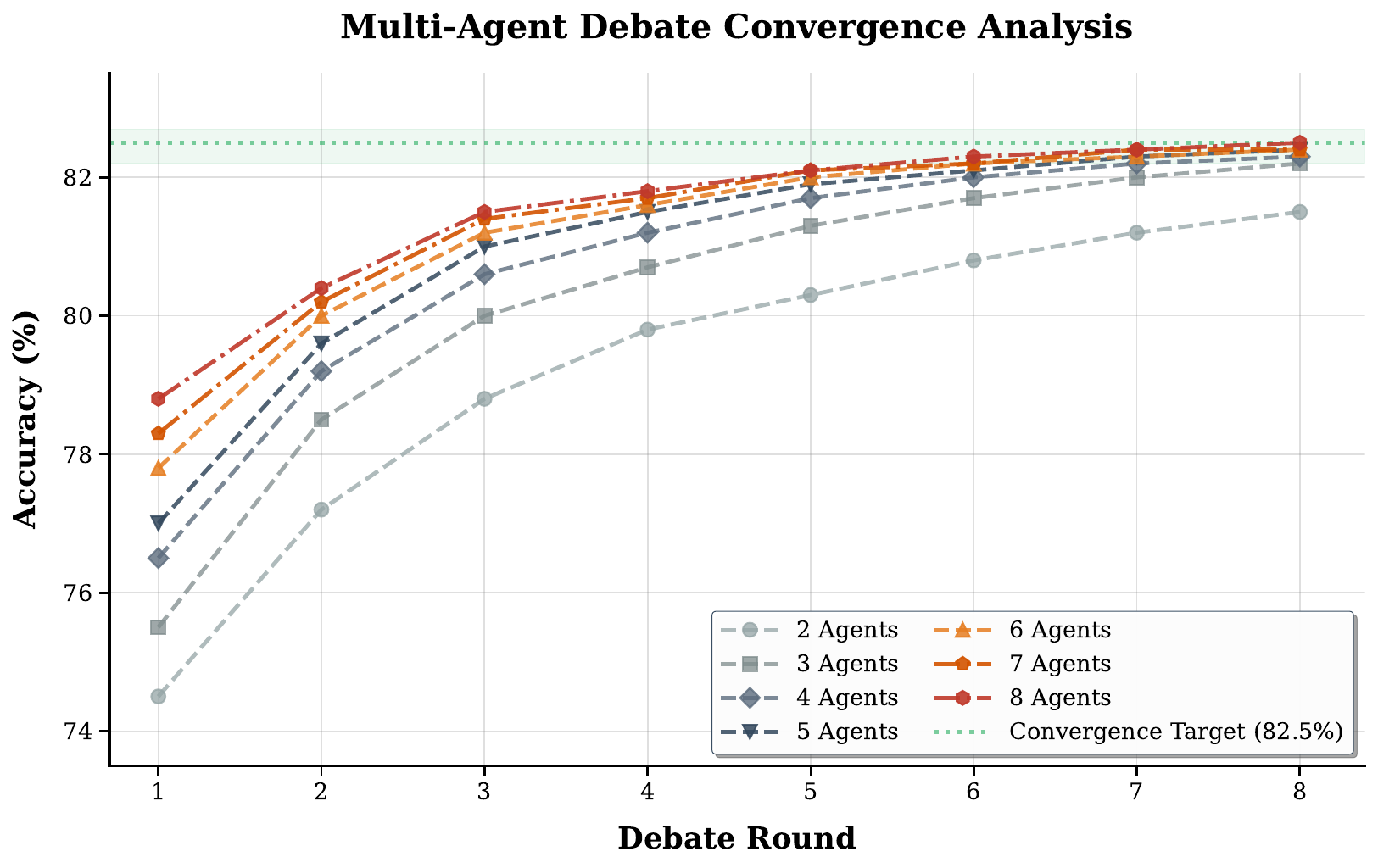}
    \caption{Scaling analysis of DebateOCR showing it's effectiveness across different agent counts and debate rounds.}
    \label{fig:convergence_analysis}
\end{figure}


\end{document}